\def\eqref#1{equation~\ref{#1}}
\def\1{\bm{1}}
\def\vmu{{\bm{\mu}}}
\def\vf{{\bm{f}}}
\def\vg{{\bm{g}}}
\def\vh{{\bm{h}}}
\def\vm{{\bm{m}}}
\def\vu{{\bm{u}}}
\def\vv{{\bm{v}}}
\def\vw{{\bm{w}}}
\def\vx{{\bm{x}}}
\def\mW{{\bm{W}}}
\DeclareMathAlphabet{\mathsfit}{\encodingdefault}{\sfdefault}{m}{sl}
\SetMathAlphabet{\mathsfit}{bold}{\encodingdefault}{\sfdefault}{bx}{n}
\newcommand{\E}{\mathbb{E}}
\newcommand{\R}{\mathbb{R}}
\DeclareMathOperator*{\argmax}{arg\,max}
\let\ab\allowbreak
\DeclareMathOperator{\Avg}{Avg}
\newtheorem{lemma}{Lemma}
\newtheorem{definition}{Definition}
\newcommand{\cC}{\mathcal{C}}
\newcommand{\cF}{\mathcal{F}}
\newcommand{\cH}{\mathcal{H}}
\newcommand{\cO}{\mathcal{O}}
\newcommand{\cS}{\mathcal{S}}
\newcommand{\cX}{\mathcal{X}}
\newcommand{\cY}{\mathcal{Y}}
\newcommand{\rademv}{\boldsymbol{\sigma}}
\title{Multiplicative Logit Adjustment  \\ Approximates Neural-Collapse-Aware \\ Decision Boundary Adjustment}
\author{Naoya Hasegawa \& Issei Sato \\
The University of Tokyo\\
\texttt{\{hasegawa-naoya410, sato\}@g.ecc.u-tokyo.ac.jp} \\
}
\newcommand{\gamo}{\gamma_{\mathrm{1v1}}}
\newcommand{\gamm}{\gamma_{\times}}
\newcommand{\gama}{\gamma_{+}}
\newcommand{\thetaA}{\theta_{k, k'}}
\newcommand{\thetaB}{\theta_{k', k}}
\newcommand{\ours}{1vs1adjuster}
\begin{document}

\maketitle

\begin{abstract}
Real-world data distributions are often highly skewed. This has spurred a growing body of research on long-tailed recognition, aimed at addressing the imbalance in training classification models. Among the methods studied, multiplicative logit adjustment (MLA) stands out as a simple and effective method. What theoretical foundation explains the effectiveness of this heuristic method?
We provide a justification for the effectiveness of MLA with the following two-step process. First, we develop a theory that adjusts optimal decision boundaries by estimating feature spread on the basis of neural collapse. Second, we demonstrate that MLA approximates this optimal method. Additionally, through experiments on long-tailed datasets, we illustrate the practical usefulness of MLA under more realistic conditions. We also offer experimental insights to guide the tuning of MLA hyperparameters.

\end{abstract}

\section{Introduction}

Publicly available benchmark datasets commonly used to evaluate classification models, such as MNIST \citep{Lecun1998GradientbasedLearning} and CIFAR100 \citep{Krizhevsky2009LearningMultiple}, usually have a balanced number of samples per class. In contrast to benchmark datasets, empirical evidence indicates that real-world data often follow an exponential distribution \citep{Reed2001ParetoZipf}. This also holds for classification problems involving real-world data \citep{Spain2007MeasuringPredicting, Li2017WebVisionDatabase, Li2022CODARealWorld}. Such distributions are commonly referred to as long-tailed data. In long-tailed data, a few classes (head classes) have a large number of samples, while most classes (tail classes) have only a limited number of samples. This imbalance poses a significant challenge in classification tasks, known as long-tailed recognition (LTR). LTR focuses on improving the accuracy of models trained on long-tailed data when evaluated on uniformly distributed data. In LTR, there are many classes, and model predictions are often biased toward head classes. Since tail classes make up the majority, this bias significantly reduces the overall average accuracy across all classes \citep{Zhang2021DeepLongTailed, Yang2022SurveyLongTailed}. 

Various methods have been proposed for LTR. State-of-the-art methods \citep{Ma2021SimpleLongTailed, Long2022RetrievalAugmented, Tian2022VLLTRLearning} are generally complex but often consist of combinations of simpler techniques, such as resampling \citep{Drummond2003C4Class} and two-stage training \citep{Kang2020DecouplingRepresentation}. One straightforward method is logit adjustment (LA) \citep{Menon2020LongtailLearning, Kim2020AdjustingDecision}. Among these techniques, post-hoc LA is a technique that adjusts the linear classifier on top of the feature map without additional training after the standard training process, offering both effectiveness and simplicity. 
A prominent example of post-hoc LA is additive LA (ALA) \citep{Menon2020LongtailLearning}, which is based on Fisher-consistent loss.
Another type of post-hoc LA is multiplicative LA (MLA) \citep{Kim2020AdjustingDecision}, which aims to adjust the decision boundaries on the basis of the differences in the feature spread across classes. Since MLA has also demonstrated significant empirical success \citep{Hasegawa2023ExploringWeight}, it raises the following question.
\begin{quote}
    Is there a theoretical foundation behind the effectiveness of MLA?
\end{quote}

\paragraph{Contribution} We provide a theoretical guarantee that MLA achieves near-optimal decision boundary adjustments. Our contributions are summarized as follows.
\begin{itemize}
    \item We derive a theoretical framework for adjusting decision boundaries optimally by using feature spread estimates from neural collapse (NC) \citep{Papyan2020PrevalenceNeural} (Section \ref{sec:main_1v1}).
    \item  We demonstrate that MLA is effective for LTR by proving it has similar decision boundaries to the method based on the aforementioned theory (Section \ref{sec:main_approximate_mla}). This clarifies under what conditions this approximation holds and how adjustments should be made.
    \item We experimentally validate that the approximation of MLA holds under realistic, non-ideal conditions where NC is not fully realized (Section \ref{sec:experiments}).
    \item We provide empirical guidelines for hyperparameter-tuning of MLA (Section \ref{sec:exp_hypara}).
\end{itemize}

\section{Related Work}
\paragraph{Long-Tailed Recognition}
LTR methods fall into three types: ``Information Augmentation'', ``Module Improvement'', and ``Class Re-balancing'' \citep{Zhang2021DeepLongTailed}. ``Information Augmentation'' aims to mitigate accuracy degradation by supplementing the limited information available for tail classes \citep{Liu2020DeepRepresentation, Chu2020FeatureSpace, Wang2021RSGSimple, Li2022LongTailedVisual, Wang2023KillTwo}. ``Module Improvement'' focuses on enhancing individual components of the network to boost overall performance. For example, \citet{Kang2020DecouplingRepresentation} developed a two-stage training method that separates the training of feature maps and classifiers. \citet{Yang2022InducingNeural} and \citet{Liu2023InducingNeural} proposed methods to promote NC in feature maps. Recent approaches also use contrastive learning \citep{Hadsell2006DimensionalityReduction} \citep{Ma2021SimpleLongTailed, Tian2022VLLTRLearning, Li2022TargetedSupervised, Wang2022PartialAsymmetric, Kang2023ExploringBalanced} and vision-language models such as CLIP \citep{Radford2021LearningTransferable} \citep{Ma2021SimpleLongTailed, Tian2022VLLTRLearning, Long2022RetrievalAugmented}. Despite these advancements, managing these methods can be challenging due to slow convergence \citep{Liu2023InducingNeural} or complex models \citep{Hasegawa2023ExploringWeight}. ``Class Re-balancing'' strategies adjust class imbalance to prevent accuracy degradation. Techniques in this category include resampling \citep{Drummond2003C4Class, Li2022LongTailedVisual} and loss reweighting \citep{Cui2019ClassbalancedLoss, Ma2022DelvingSemantic}. However, reweighting is known to be ineffective for training overparameterized models \citep{Byrd2019WhatEffect, Zhai2022UnderstandingWhy}. LA is a simple yet effective method that falls under this category.

\paragraph{Logit Adjustment}
LA modifies logit values by using the number of samples in each class, to bridge the distribution gap between the training and test data. LA methods fall into two categories: loss-function adjustments \citep{Cao2019LearningImbalanced, Menon2020LongtailLearning, Tang2020LongtailedClassification, Kini2021LabelImbalancedGroupSensitive, Wang2023UnifiedGeneralization} and post-hoc LA methods that do not require additional training \citep{Menon2020LongtailLearning, Kim2020AdjustingDecision}. The latter is particularly advantageous as it eliminates the need for retraining during hyperparameter tuning, thereby reducing both effort and time. Consequently, this paper delves deeper into post-hoc LA. Among post-hoc methods, there are two types depending on how the logits are adjusted: ALA \citep{Menon2020LongtailLearning} and MLA \citep{Kim2020AdjustingDecision}. ALA is grounded in a Fisher-consistent loss \citep{Menon2020LongtailLearning}, providing some theoretical justification for its method. However, ALA assumes an infinite number of training samples, raising questions about its effectiveness with finite sample sizes. In contrast, MLA intuitively adjusts decision boundaries by using the size of each class cluster \citep{Kim2020AdjustingDecision}. MLA has empirically outperformed ALA in certain cases \citep{Hasegawa2023ExploringWeight}.

\paragraph{Neural Collapse}
NC is a term introduced by \citet{Papyan2020PrevalenceNeural} to describe four phenomena observed in the terminal phase of training classification models. These phenomena are as follows. 
\begin{itemize}
    \item[NC1:] Training feature vectors within each class converge to their respective class means.
    \item[NC2:] The class means of training feature vectors converge to form a simplex Equiangular Tight Frame (ETF) \citep{Strohmer2003GrassmannianFrames}.
    \item[NC3:] The class means of the training feature vectors align with the corresponding weight vectors of the linear classifier.
    \item[NC4:] In prediction, the model outputs the class whose mean training feature vector is closest to the input feature vector in Euclidean distance.
\end{itemize} 

NC is known to improve generalization accuracy and robustness \citep{Papyan2020PrevalenceNeural}. Additionally, \citet{Galanti2021RoleNeural} theoretically demonstrated that NC can occur even with test samples or samples from unseen classes. Various studies have explored the conditions under which NC arises \citep{Rangamani2022NeuralCollapse, Han2022NeuralCollapse}, and it is known that models trained using cross-entropy loss can also exhibit NC \citep{Lu2021NeuralCollapse, Ji2022UnconstrainedLayerPeeled}. However, it has been observed that training with imbalanced data may hinder NC \citep{Fang2021ExploringDeep, Thrampoulidis2022ImbalanceTrouble, Dang2024NeuralCollapse}. To address this problem, some methods have been proposed to promote NC even with imbalanced data \citep{Yang2022InducingNeural, Liu2023InducingNeural}. \citet{Yang2022InducingNeural} introduced the ETF classifier, which fixes the linear classifier weights to form an ETF, thereby encouraging NC.

\section{Preliminaries}\label{sec:preliminaries}
We outline the notations used throughout this paper. For a comprehensive list, refer to the table in Appendix \ref{app:acronym_notation}. For any integer $i$, $[i]$  represents the set $\{1, \ldots, i\}$.
We use $\cX \subset \R^p$ to denote the instance space and $\cY \equiv [K]$ to denote the label space. In a $K$-class classification problem, each sample $(\vx, y) \in \cX \times \cY$ is drawn from a distribution $P$. The training dataset, $\cS \equiv \{(\vx_i, y_i) \mid \vx_i \in \cX, y_i \in \cY\}_{i=1}^N$, consists of independent and identically distributed (i.i.d.) samples drawn from $P$. Partitioning by class, we define $\cS_k \equiv \{\vx_i \mid (\vx_i, y_i) \in \cS, y_i = k\}$ as the set of samples drawn from the class-conditional distribution $P_k$. Let $N \equiv |\cS|$ be the total number of samples, and $n_k = |\cS_k|$ be the number of samples in class $k$. Without loss of generality, we assume the classes are sorted in descending order of sample size, i.e., $n_k \ge n_{k+1}$ for all $k \in [K-1]$. The imbalance factor is defined by $\rho \equiv \frac{n_1}{n_K} = \frac{\max_k n_k}{\min_k n_k}$, which measures the degree of imbalance in the training dataset, where $\rho \gg 1$ holds in LTR scenarios. In contrast to the imbalanced training dataset, the test dataset used for evaluation has a uniform label distribution. Specifically, we denote $\tilde{\cS}_k \sim P_k^{\tilde{n}_k}$ by the test dataset for class $k$, with $\tilde{n}_k$ samples. Then, it holds that $\tilde{n}_k = \tilde{n}_{k'}$ for all $k, k' \in \cY$.

Let $\vf: \R^p \to \R^d$ be a feature map from the set $\cF \subset \{\vf': \R^p \to \R^d\}$. We focus on the input to the final layer of this feature map. Suppose the features are given by $\vf(\vx) = \mW^1\vh(\vx)$, where $\mW^1 \in \R^{d \times d_1}$ is a linear layer, and $\vh \in \cH: \R^p \to \R^{d_1}$ represents the output of the second-to-last layer of the feature map. For simplicity, we assume that $\vh$ is bounded, meaning there exists a constant $B \geq 1$ such that $\sup_{\vx \in \cX, \vh \in \cH} \|\vh(\vx)\| \le B$, where $\|\cdot\|$ denotes the Euclidean norm for vectors.

Let the expected value and mean of the features be denoted as $\vmu_{\vf}(P_k) \equiv \E_{\vx \sim P_k}[\vf(\vx)]$ and $\vmu_{\vf}(\cS_k)\equiv \frac1{n_k} \sum_{\vx \in \cS_k} \vf(\vx)$, respectively. When $\vf$ is clear from the context, we abbreviate these to $\vmu(P_k)$ and $\vmu(\cS_k)$.

These features are fed into a linear classifier $\mW \in \R^{d \times K}$ and we obtain the logits $\vg(\vx) \equiv \mW^\top \vf(\vx)$, where $\cdot^\top$ denotes the transpose of a matrix. The linear classifier weights can be expressed using column vectors $\vw_k \in \R^d$, where $\mW = [\vw_1, \ldots, \vw_K]$. Thus, the logit for class $k$ can be written as $g_k(\vx) \equiv \vw_k^\top \vf(\vx)$. We assume that the weights of the linear classifier form an ETF. This assumption holds under NC and when we use an ETF classifier \citep{Yang2022InducingNeural}. This leads to the following equation:

\begin{align}
    \vw_k^\top \vw_{k'} = 
        \begin{cases}
        1 & \text{if $k=k'$,} \\
        -\frac1{K-1} & \text{otherwise.}
        \end{cases}
\end{align}

\paragraph{Logit Adjustment}
MLA is a method that adjusts logits by scaling the norm of the linear classifier weights for each class \citep{Kim2020AdjustingDecision}. This is equivalent\footnote{\citet{Kim2020AdjustingDecision} adjusted logits to ${\ab\left(\frac{n_1}{n_k}\right)^{\gamm}}g_k(\vx)$, while \citet{Menon2020LongtailLearning} adjusted them to $g_k(\vx) - \gama\log \frac{n_k}{\sum_{k'} n_{k'}}$. Note that both methods differ from the adjustment used in this paper only by a constant multiple or constant term. However, since this does not affect the ranking of logits across classes or the classification outcomes, we adopt the adjustments described in the main text of this paper.\label{fot:la_dif}} in classification outcome to adjusting the logit for class $k$, $g_k(\vx)$, by scaling it as ${n_k^{-\gamm}}g_k(\vx)$. Here, $\gamm > 0$ is a hyperparameter. \citet{Kim2020AdjustingDecision} used projected gradient descent to normalize the norm of the linear layer, $\|\vw_k\|$, to 1. This normalization step is unnecessary in our setting because we use an ETF classifier and the weights remain fixed. 
In contrast, ALA adjusts logits additively. This is equivalent\footref{fot:la_dif} to modifying the logit for class $k$, $g_k(\vx)$, by subtracting $\gama \log n_k$, where $\gama > 0$ is a hyperparameter.

\section{Main analysis}\label{sec:main_analysis}
MLA is a post-hoc adjustment method that increases the probability of classifying samples into tail classes by changing the angles of the decision boundaries \citep{Kim2020AdjustingDecision}. To derive the appropriate angles, it is necessary to estimate the sizes of the feature spread for each class. \citet{Galanti2021RoleNeural} demonstrated that NC also occurs in test samples and provided a quantitative measure of feature spread. Following their approach, we verify the effectiveness of MLA theoretically through the lens of NC. In the following sections, we employ the ETF classifier \citep{Yang2022InducingNeural} to encourage NC. This assumption can also be derived from the assumption that NC2 and NC3 hold. Using an ETF classifier also ensures that the weight vectors are normalized, as shown to be effective by \citet{Kim2020AdjustingDecision}.

First, in Section \ref{sec:main_preliminaries} we outline the assumptions and additional notations necessary to present our theory. Next, in Section \ref{sec:main_1v1}, we demonstrate how to optimally adjust the decision boundaries from the perspective of NC. In Section \ref{sec:main_approximate_mla}, we indicate that MLA approximates this adjustment. Finally, Section \ref{sec:main_approximate_ala} compares ALA and MLA in view of our theory.

\subsection{Preliminaries: SVD and Rademacher Complexity}\label{sec:main_preliminaries}
We consider the singular value decomposition of $\mW^1$. When $\mW^1$ has rank $r$, we can express it as $\mW^1 = \sum_{l=1}^r s_l \vu_l \vv_l^\top$, where for each $l \in [r]$, $\vu_l \in \R^d$, $\vv_l \in \R^{d_1}$, and $s_l \in \R_{>0}$. Moreover, for all $l \in [r]$, we have $\|\vv_l\| = \|\vu_l\| = 1$, and for $l \neq l' \in [r]$, $\vv_l^\top \vv_{l'} = \vu_l^\top \vu_{l'} = 0$. Without loss of generality, we assume that $\forall l \in [r-1]$, $s_l \geq s_{l+1}$. In many deep neural networks, weight matrices can be replaced with low-rank representations while maintaining performance \citep{Frankle2018LotteryTicket, Idelbayev2020LowRankCompression, Kajitsuka2023AreTransformers}. Thus, it is reasonable to consider $r \ll \min(d, d_1)$ in practical situations.

We define the set $\cH_l$ as:
\begin{align}
    \cH_l \equiv \{(\vv_l, \vh) \ | \ \vf \in \cF, \vf = \mW^1 \vh, \ \vv_l \text{ is the } l\text{-th right singular vector of } \mW^1 \}.
\end{align}

To evaluate the generalization performance of $\cH_l$, we use the Rademacher complexity.
The Rademacher complexity and empirical Rademacher complexity of $\cH_l$ are defined by:
\begin{align}
    \mathfrak{R}_{n_k}(\cH_l) &\equiv \E_{\cS'_k \sim P_k^{n_k}, \rademv} \left[\sup_{(\vv_l, \vh) \in \cH_l} \frac{1}{n_k} \sum_{\vx_j \in \cS'_k} \sigma_j \vv_l^\top \vh(\vx_j) \right], \\
    \hat{\mathfrak{R}}_{\cS_k}(\cH_l) &\equiv \E_{\rademv} \left[\sup_{(\vv_l, \vh) \in \cH_l} \frac{1}{n_k} \sum_{\vx_j \in \cS_k} \sigma_j \vv_l^\top \vh(\vx_j) \right],
\end{align}
where $\rademv = [\sigma_1, \ldots, \sigma_{n_k}]^\top$ represents the i.i.d. Rademacher random variables. In many cases, the Rademacher complexity scales as $\mathfrak{R}_{n_k}(\cH_l) \leq {\frac{\cC(\cH_l, \cX)}{\sqrt{n_k}}}$, where $\cC(\cH_l, \cX)$ is some complexity measure independent of $n_k$ \citep{Cao2019LearningImbalanced, Galanti2021RoleNeural}. For instance, this holds for rectified linear unit (ReLU) neural networks; see Appendix \ref{app:proof_relu} for further details. We make this assumption in our settings. We define the mean of the complexity $\bar{\cC}(\cF, \cX)\equiv \frac1r \sum_{l=1}^r \cC(\cH_l, \cX)$

\subsection{Optimal Decision Boundary Adjustment from the Perspective of NC}\label{sec:main_1v1}
\begin{figure}
    \begin{center}
    \includegraphics[width=1.0\linewidth]{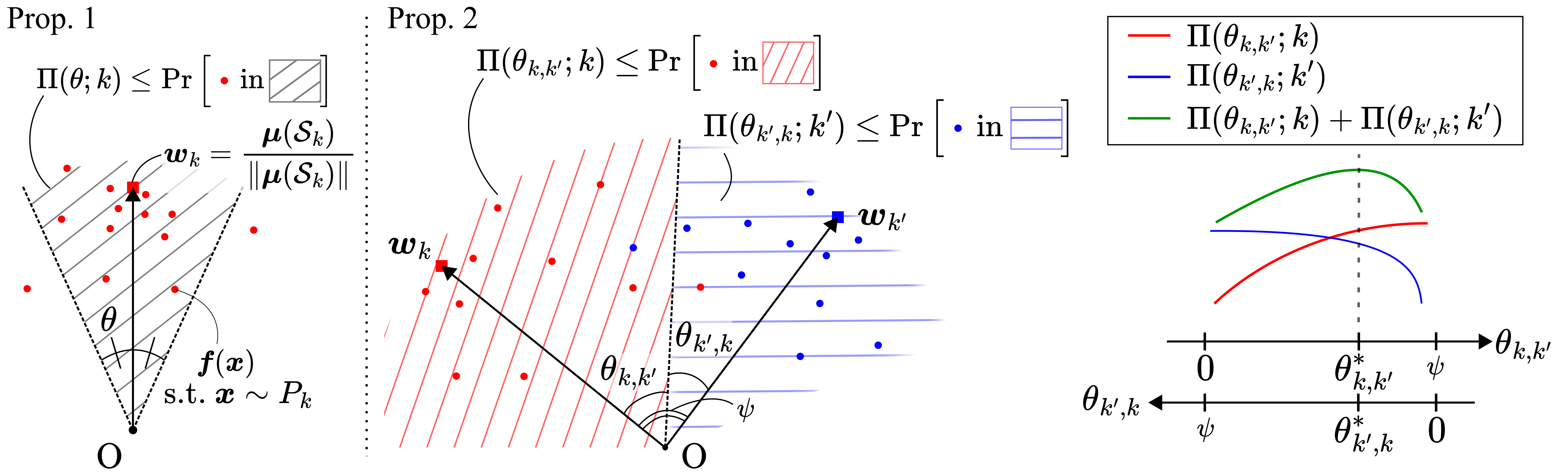}
    \caption{Overview of Propositions \ref{prop:feature_prob} and \ref{prop:max_deg_2}. The angular bound probability $\Pi(\theta; k)$ represents the lower bound of the probability that the feature vector of $\vx$ sampled from $P_k$ lies within the shaded region. Proposition \ref{prop:feature_prob} indicates that $\Pi(\theta; k) = 1 - \tilde{\cO}\left(1/\sqrt{n_k}\right)$. Proposition \ref{prop:max_deg_2} offers the optimal decision boundary by maximizing $\Pi(\thetaA; k) + \Pi(\thetaB; k')$ with respect to $\thetaA$ and $\thetaB$.}
    \label{fig:props}
    \end{center}
\end{figure}

We indicate that the following result holds for general neural networks. For the proofs of the propositions presented in this section and for the case of ReLU neural networks, see Appendix \ref{app:proof}.

In NC, the training features converge to their respective class means \citep{Papyan2020PrevalenceNeural}. It has also been demonstrated that test features are within a certain range from the class mean of training features, with this range depending on the sample size \citep{Galanti2021RoleNeural}. Following a similar approach, we can quantitatively measure the angular deviation between test features and the corresponding training class mean, as a function of the sample size. In LTR, sample sizes vary significantly across classes, resulting in a considerable imbalance in the angular deviation. Thus, we aim to adjust the decision boundary accordingly.

To achieve this, we first define the angular bound probability $\Pi(\theta; k)$ as follows.
\begin{definition}[Angular bound probability]
    The angular bound probability $\Pi(\theta; k)$ for class $k$ is the lower bound of the probability that the angle between $\vf(\vx)$ and $\vmu(\cS_k)$, for $\vx \sim P_k$, is less than $\theta$. 
\end{definition}

That is, we have that $\Pi(\theta; k) \le \Pr_{\vx \sim P_k}\ab[\angle(\vf(\vx), \vmu(\cS_k)) < \theta]$, where $\angle(\cdot, \cdot)$ is the angle between the two vectors. The angular bound probability can also be seen as concentration of the class $k$ features. 
The following proposition provides a quantitative measure of $\Pi(\theta; k)$:
\begin{restatable}{proposition}{PropFeature}
\label{prop:feature_prob}
    Suppose $n_k > 2$. For $\vf \in \cF$, assume that for all $\vx \in \cS_k$, $\vf(\vx) = \vmu(\cS_k)$ holds. Consider any $\theta$ that satisfies the following condition:
\begin{align}
    \label{eq:prop1_theta}
    \frac\pi{2}\frac{r\|\mW^1\|_2}{\sqrt{n_k}\|\vmu(\cS_k)\|} \ab(4{ \bar{\cC}(\cF, \cX)} + 4B + B\sqrt{2\log \frac {\sqrt{n_k}\|\vmu(\cS_k)\|}{ \|\mW^1\|_2}})
< \theta < \frac\pi2,
\end{align}
where $\|\cdot\|_2$ denotes the spectral norm.
For such $\theta$, the following holds:
\begin{align}
    \Pi(\theta;k) = 1-\frac{\pi}{2\theta}\frac{r\|\mW^1\|_2}{\sqrt{n_k}\|\vmu(\cS_k)\|} \left(4{\bar{\cC}(\cF, \cX)} + 4B + 1 + B\sqrt{2\log \frac {2\theta\sqrt{n_k}\|\vmu(\cS_k)\|}{\pi \|\mW^1\|_2}}\right).
\end{align}

\end{restatable}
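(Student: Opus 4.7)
The overall plan is to control $\Pr_{\vx\sim P_k}[\angle(\vf(\vx),\vmu(\cS_k))\ge\theta]$ by first bounding the Euclidean deviation $\|\vf(\vx)-\vmu(\cS_k)\|$ with high probability, and then converting this to an angular bound via Jordan's inequality $\angle\le(\pi/2)\sin\angle$. Under the NC1 assumption $\vf(\vx')=\vmu(\cS_k)$ for $\vx'\in\cS_k$, we have $\vmu(\cS_k)=\mW^1\bar{\vh}_k$ with $\bar{\vh}_k=\frac{1}{n_k}\sum_{\vx'\in\cS_k}\vh(\vx')$, so $\vf(\vx)-\vmu(\cS_k)=\mW^1(\vh(\vx)-\bar{\vh}_k)$. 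This reduces the problem to controlling the displacement of the penultimate features along the effective directions of $\mW^1$.

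Using the SVD $\mW^1=\sum_{l=1}^r s_l \vu_l\vv_l^\top$ and the inequality $\|\mW^1\vy\|\le\|\mW^1\|_2\sum_{l=1}^r|\vv_l^\top\vy|$, the task reduces to bounding $|\vv_l^\top(\vh(\vx)-\bar{\vh}_k)|$ for each $l\in[r]$. I would split this into a train-side term $|\vv_l^\top\bar{\vh}_k-\E_{\vx'\sim P_k}[\vv_l^\top\vh(\vx')]|$ and a test-side term $|\vv_l^\top\vh(\vx)-\E_{\vx'\sim P_k}[\vv_l^\top\vh(\vx')]|$. The train-side is a uniform deviation over $\cH_l$ controlled by the standard symmetrization-plus-McDiarmid argument, yielding $2\mathfrak{R}_{n_k}(\cH_l)+B\sqrt{2\log(1/\delta)/n_k}$ with probability at least $1-\delta$ over $\cS_k$; invoking the assumed scaling $\mathfrak{R}_{n_k}(\cH_l)\le\cC(\cH_l,\cX)/\sqrt{n_k}$ turns this into $(2\cC(\cH_l,\cX)+B\sqrt{2\log(1/\delta)})/\sqrt{n_k}$. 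The test-side fluctuation of a single bounded variable (using $|\vv_l^\top\vh(\vx)|\le\|\vh(\vx)\|\le B$) admits a matching sub-Gaussian tail. Summing over $l$ and using $\bar{\cC}(\cF,\cX)=\frac{1}{r}\sum_l\cC(\cH_l,\cX)$ collects the bound into $(r\|\mW^1\|_2/\sqrt{n_k})\cdot(4\bar{\cC}(\cF,\cX)+4B+1+B\sqrt{2\log(1/\delta)})$, exactly the constants in the statement.

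The Euclidean-to-angle conversion uses $\sin\angle(\vf(\vx),\vmu(\cS_k))\le\|\vf(\vx)-\vmu(\cS_k)\|/\|\vmu(\cS_k)\|$ (valid on the high-probability event that $\|\vf(\vx)\|\ge\|\vmu(\cS_k)\|$), combined with Jordan's inequality to give $\angle\le(\pi/2)\|\vf(\vx)-\vmu(\cS_k)\|/\|\vmu(\cS_k)\|$. Requiring this upper bound to fall below $\theta$ and choosing the McDiarmid confidence $\delta=\pi\|\mW^1\|_2/(2\theta\sqrt{n_k}\|\vmu(\cS_k)\|)$ makes the implied Euclidean threshold exactly consistent with the target angle, and substituting this $\delta$ back yields both the stated logarithmic argument $2\theta\sqrt{n_k}\|\vmu(\cS_k)\|/(\pi\|\mW^1\|_2)$ and the $\pi/(2\theta)$ prefactor. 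The lower-bound condition (\ref{eq:prop1_theta}) on $\theta$ is precisely what ensures $\delta<1$ (so the logarithm is positive) and keeps the resulting lower bound on $\Pi(\theta;k)$ nontrivial.

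The main obstacle is the tight bookkeeping that aligns the McDiarmid confidence parameter with the target angle so that the stated constants $4\bar{\cC}(\cF,\cX)+4B+1+B\sqrt{2\log(\cdot)}$ emerge without slack multiplicative factors; in particular, treating the single-sample test-side fluctuation in a form compatible with the uniform train-side bound, and verifying $\|\vf(\vx)\|\ge\|\vmu(\cS_k)\|$ on the good event so that the Euclidean-to-angle step does not lose constants, are the delicate pieces. The Rademacher-complexity symmetrization and Jordan-inequality steps are otherwise standard.
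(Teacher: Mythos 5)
Your skeleton (SVD decomposition $\|\mW^1\vy\|\le\|\mW^1\|_2\sum_{l=1}^r|\vv_l^\top\vy|$, symmetrization plus McDiarmid for the training-sample side, then a Euclidean-to-angle conversion) parallels the paper, but there is a genuine gap in how you handle the randomness of the single test draw $\vx\sim P_k$. You claim the test-side term $|\vv_l^\top\vh(\vx)-\E_{\vx'\sim P_k}[\vv_l^\top\vh(\vx')]|$ ``admits a matching sub-Gaussian tail'' from boundedness $|\vv_l^\top\vh(\vx)|\le B$. Boundedness of a \emph{single} draw only gives a sub-Gaussian tail at scale $B$, i.e.\ $\Pr[|Z-\E Z|>t]\le 2\exp(-t^2/(2B^2))$; to keep that failure probability small you must take $t$ of order $B$, not $B/\sqrt{n_k}$, so this term cannot match the $1/\sqrt{n_k}$ train-side scaling and your final bound does not come out at the stated order. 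Indeed, if your route worked it would give an exponentially small failure probability for fixed $\theta$, which is strictly stronger than the proposition and cannot follow from the stated assumptions; the proposition's failure probability is only $\tilde{\cO}\ab(1/(\theta\sqrt{n_k}))$, the signature of a first-moment (Markov) argument. The paper handles the test sample differently: it bounds $\E_{\vx\sim P_k,\cS_k}\ab[\|\vf(\vx)-\vmu(\cS_k)\|]\le\E_{\vx,\vx'\sim P_k}\ab[\|\vf(\vx)-\vf(\vx')\|]$, controls the latter by its empirical pairwise counterpart (which vanishes under the NC1 hypothesis $\vf(\vx)=\vmu(\cS_k)$ on $\cS_k$) plus a uniform-deviation term of order $r\|\mW^1\|_2(4\bar{\cC}+4B+B\sqrt{2\log(r/\delta)})/\sqrt{n_k}$, and then applies \emph{Markov's inequality} with confidence $\delta'$ to the test draw. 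The $\pi/(2\theta)$ prefactor multiplying the entire constant, and the ``$+1$'', come respectively from the choice of $\delta'$ in the Markov step and from adding the McDiarmid failure probability $\delta$ in the union bound $1-(\delta+\delta')$ — not, as you suggest, from tuning the McDiarmid confidence alone, which cannot produce a $1/\theta$ factor in front of the $4\bar{\cC}+4B$ part.

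Two smaller points: the Euclidean-to-angle step should be conditioned on $\|\vf(\vx)-\vmu(\cS_k)\|\le\|\vmu(\cS_k)\|$ (so the angle is at most $\arcsin\ab(\|\vf(\vx)-\vmu(\cS_k)\|/\|\vmu(\cS_k)\|)$, then use $\arcsin\ab(\tfrac{2}{\pi}\theta)\le\theta$), not on $\|\vf(\vx)\|\ge\|\vmu(\cS_k)\|$, which is neither the right condition nor established on your good event; and your train-side bound needs the union bound over $l\in[r]$, which is what produces the $\log(r/\delta)$ that becomes the stated logarithm $\log\ab(2\theta\sqrt{n_k}\|\vmu(\cS_k)\|/(\pi\|\mW^1\|_2))$ after substituting $\delta$. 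To repair the proof, replace your test-side sub-Gaussian step with the expectation-plus-Markov argument above; the rest of your outline then goes through essentially as in the paper.
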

The overview of this proposition is illustrated in the left side of Figure \ref{fig:props}.
When NC occurs, note that $\vf(\vx) = \vmu(\cS_k)$ holds for all $\vx \in \cS_k$ by NC1 and $\vw_k = \frac{\vmu(\cS_k)}{\|\vmu(\cS_k)\|}$ holds by NC3. Additionally, observe that the left-hand side of Eq.\,(\ref{eq:prop1_theta}) and $1-\Pi(\theta; k)$ are $\tilde{\cO}\ab({1}/{\sqrt{n_k}})$, where $\tilde{\cO}$ denotes asymptotic notation ignoring logarithmic terms. From this proposition, we can quantitatively infer that as the sample size of a class increases, the features become more concentrated within a narrower region, with higher probability. This allows us to consider the optimal angle for the decision boundary.

Consider the optimal angle for the decision boundary between two classes, $k$ and $k'$. Since we use an ETF classifier, the linear classifier weights forms an ETF structure, meaning the angle between $\vw_k$ and $\vw_{k'}$ is $\psi \equiv \arccos\left(-\frac{1}{K - 1}\right)$. Now, consider a decision boundary that forms an angle $\thetaA$ with $\vw_k$. The angle between this boundary and $\vw_{k'}$ is $\thetaB \equiv \psi - \thetaA$. Here, $0 < \thetaA, \thetaB < \psi$. 

In this setup, the probability that a sample $\vx \sim P_k$ lies on the $\vw_k$ side of the boundary is at least $\Pi(\thetaA; k)$. Similarly, the probability that a sample $\vx \sim P_{k'}$ lies on the $\vw_{k'}$ side is at least $\Pi(\thetaB; k')$. Therefore, the accuracy of classification between the two classes is at least $\frac{1}{2} \left(\Pi(\thetaA; k) + \Pi(\thetaB; k')\right)$, which depends on the angle $\thetaA$ of the decision boundary. Thus, we seek the value of $\thetaA$ that maximizes this lower bound. The following proposition provides the solution to this problem.

\begin{restatable}{proposition}{PropProbTwo}
    \label{prop:max_deg_2}
        Suppose $n_k, n_{k'} > 2$. For $\vf \in \cF$, assume that for both $\hat{k} \in \{k, k'\}$, $\vf(\vx) = \vmu(\cS_{\hat{k}})$ holds for all $\vx \in \cS_{\hat{k}}$. Consider the following maximization problem:
        \begin{align}
            &\max_{\thetaA, \thetaB} {\Pi}(\thetaA;k) + {\Pi}(\thetaB;k') \qquad \text{s.t. } \thetaA, \thetaB > 0  , \quad \thetaA + \thetaB = \psi.
        \end{align}
        The unique solution $\thetaA^*, \thetaB^*$ within the range where $\thetaA$ and $\thetaB$ satisfy Eq.\,(\ref{eq:prop1_theta}) is given by:
        \begin{align}\label{eq:max_ans}
            \thetaA^*  = \psi \frac{\|\vmu(\cS_{k'})\|\sqrt{n_{k'}}}{\|\vmu(\cS_k)\|\sqrt{n_k} + \|\vmu(\cS_{k'})\|\sqrt{n_{k'}}},\quad\thetaB^*  = \psi \frac{\|\vmu(\cS_k)\|\sqrt{n_k}}{\|\vmu(\cS_k)\|\sqrt{n_k} + \|\vmu(\cS_{k'})\|\sqrt{n_{k'}}}.
    \end{align}
\end{restatable}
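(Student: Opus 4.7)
The plan is to solve this constrained maximization by the Lagrange multiplier method, exploiting a structural property implicit in Proposition~\ref{prop:feature_prob}. Writing $a_k \equiv \sqrt{n_k}\|\vmu(\cS_k)\|$, the formula for $\Pi(\theta; k)$ depends on $\theta$ and $k$ only through the product $\theta a_k$; one can therefore write $\Pi(\theta; k) = \tilde\Pi(\theta a_k)$ for the universal decreasing function
\[
\tilde\Pi(u) \;=\; 1 - \frac{\pi r\|\mW^1\|_2}{2u}\Bigl(4\bar{\cC}(\cF,\cX) + 4B + 1 + B\sqrt{2\log \frac{2u}{\pi\|\mW^1\|_2}}\Bigr).
\]
This collapses the asymmetry between the two classes into the two scalars $a_k, a_{k'}$ and makes the scale-invariant geometry of the problem transparent.

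Next I would form the Lagrangian $L = \tilde\Pi(\thetaA a_k) + \tilde\Pi(\thetaB a_{k'}) - \lambda(\thetaA + \thetaB - \psi)$ and derive the first-order conditions $a_k\,\tilde\Pi'(\thetaA a_k) = a_{k'}\,\tilde\Pi'(\thetaB a_{k'}) = \lambda$. The candidate $(\thetaA^*,\thetaB^*)$ given by Eq.~(\ref{eq:max_ans}) satisfies $\thetaA^* a_k = \thetaB^* a_{k'} = \psi\,a_k a_{k'}/(a_k+a_{k'})$, which equates the arguments of $\tilde\Pi'$ on both sides; combined with $\thetaA^* + \thetaB^* = \psi$ this recovers the closed form by elementary algebra, and I would verify the FOC at this candidate and check that it lies in the range dictated by Eq.~(\ref{eq:prop1_theta}). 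Uniqueness then follows from strict concavity of $F(\thetaA) \equiv \Pi(\thetaA; k) + \Pi(\psi - \thetaA; k')$ on the admissible interval, which reduces to sign-analysis of $\tilde\Pi''$ on the corresponding range of $u$.

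The main obstacle is the $\sqrt{2\log(2u/(\pi\|\mW^1\|_2))}$ correction inside $\tilde\Pi$. Dropping it leaves $\tilde\Pi(u) = 1 - C/u$, convex-decreasing on the positive axis, so the whole argument goes through by inspection. With the correction present, both the FOC verification and the strict concavity of $F$ require carefully bounding derivatives of $\sqrt{\log u}/u$; the lower bound on $u$ imposed by Eq.~(\ref{eq:prop1_theta}) is exactly the regime in which the leading $1/u$ behavior dominates these sub-leading terms, so the logarithmic correction neither displaces the critical point nor disrupts strict concavity.
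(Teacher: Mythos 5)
Your reduction of both bounds to the single scalar $a_k \equiv \sqrt{n_k}\,\|\vmu(\cS_k)\|$ via $\Pi(\theta;k)=\tilde\Pi(\theta a_k)$ is sound (one slip: $\tilde\Pi$ is \emph{increasing}, not decreasing, in its argument on the admissible range), and your overall route---first-order condition plus strict concavity for uniqueness---is the same one the paper follows. The genuine gap is the step you defer to ``verify the FOC at this candidate'': as you have set it up, it does not hold. Your Lagrangian gives $a_k\,\tilde\Pi'(\thetaA a_k) = a_{k'}\,\tilde\Pi'(\thetaB a_{k'})$, and at the candidate of Eq.\,(\ref{eq:max_ans}) the \emph{arguments} coincide, $\thetaA^* a_k = \thetaB^* a_{k'}$, so the two values of $\tilde\Pi'$ agree; but then your FOC reduces to $a_k = a_{k'}$, which is exactly what fails in the long-tailed setting (and $\tilde\Pi'\neq 0$ there, since $\tilde\Pi'(u)\propto u^{-2}\left(\beta + B\sqrt{\zeta} - B/\sqrt{\zeta}\right)$ with $\beta = 4\bar{\cC}(\cF,\cX)+4B+1$ and $\zeta$ bounded away from $0$ in the range of Eq.\,(\ref{eq:prop1_theta})). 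Equating the arguments equalizes the two bound terms $1-\Pi(\thetaA;k)$ and $1-\Pi(\thetaB;k')$---an equalization (max--min) property of the candidate---not the derivatives appearing in your FOC, because the chain-rule factors $a_k$, $a_{k'}$ survive.

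Your own fallback computation exposes this: dropping the logarithm, $\tilde\Pi(u)=1-C/u$, the stationarity condition for $\Pi(\thetaA;k)+\Pi(\psi-\thetaA;k')$ is $C a_k/(\thetaA a_k)^2 = C a_{k'}/(\thetaB a_{k'})^2$, i.e.\ $\thetaA^2 a_k = \thetaB^2 a_{k'}$, whose solution weights the classes by $n_k^{1/4}\|\vmu(\cS_k)\|^{1/2}$ rather than by $a_k$; so the claim that in this limit ``the whole argument goes through by inspection'' is not correct, and the logarithmic correction is not where the difficulty lies. For comparison, the paper's proof does not pass through your explicit Lagrangian identity: it asserts directly that $\partial\Pi(\theta;k)/\partial\theta$ at $\thetaA^*$ equals $\partial\Pi(\theta;k')/\partial\theta$ at $\thetaB^*$ whenever $\thetaA^* a_k = \thetaB^* a_{k'}$, and devotes its work to the second-derivative (concavity) check; since $\partial\Pi(\theta;k)/\partial\theta = \frac{\alpha_k}{\theta^2}\left(\beta + B\zeta^{1/2} - B\zeta^{-1/2}\right)$ with $\alpha_k\propto 1/a_k$ and $\zeta$ a function of $\theta a_k$ alone, that asserted equality is precisely the point your chain-rule bookkeeping puts under strain. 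To complete your proposal you would need either to establish that derivative identity in a form consistent with your FOC, or to re-derive the candidate from the condition you can actually verify (equalization of the two per-class bounds), which amounts to optimizing the minimum rather than the sum. As written, the central step of your proof would fail.
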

The overview of this proposition is illustrated in the right side of Figure \ref{fig:props}.
When $n_k$ and $n_{k'}$ are sufficiently large, Eq.\,(\ref{eq:prop1_theta}) holds over a wide range of $0 < \theta < \frac{\pi}{2}$. Moreover, as $K \to \infty$, $\psi = \arccos\left(-\frac{1}{K - 1}\right) \to \frac{\pi}{2}$ and $\thetaA^*, \thetaB^* < \frac{\pi}{2}$ holds, making Eq.\,(\ref{eq:max_ans}) meet the condition of Eq.\,(\ref{eq:prop1_theta}) when $K$ is sufficiently large. This is a reasonable assumption in LTR \citep{Yang2022SurveyLongTailed}. In practice, for CIFAR100-LT, $\psi - \frac{\pi}{2} \sim 0.01$, meaning that $\thetaA^*, \thetaB^* < \frac{\pi}{2}$ is sufficiently valid.

This result can easily be extended to multi-class cases (refer to Proposition \ref{prop:max_deg_multi}). In multi-class cases, it becomes an optimization problem of the angles $\thetaA$ for every pair of classes $k, k' \neq k$ in $\cY$, defined as $\Theta \equiv \{\thetaA \mid k\neq k' \in \cY\}$.

\begin{restatable}{proposition}{PropProbMulti}
\label{prop:max_deg_multi}
    Suppose $n_k > 2$ for all $k \in \cY$. For $\vf \in \cF$, assume that for all $k \in \cY$, $\vf(\vx) = \vmu(\cS_{{k}})$ holds for all $\vx \in \cS_{{k}}$. Consider the following maximization problem:
    \begin{align}
        &\max_{\Theta} \sum_{k\in \cY}\sum_{k' \neq k} \Pi(\thetaA;k) + \Pi(\thetaB;k') \quad \text{s.t. } \forall k, k' \in \cY, \thetaA, \thetaB > 0 \ \ \text{and} \ \ \thetaA +\thetaB = \psi.
    \end{align}
    The unique solution $\thetaA^* \in \Theta^*$ within the range where all $\thetaA^* \in \Theta^*$ satisfies Eq.\,(\ref{eq:prop1_theta}) is given by:
    \begin{align}\label{eq:max_ans_multi}
        \thetaA^*  = \psi \frac{\|\vmu(\cS_{k'})\|\sqrt{n_{k'}}}{\|\vmu(\cS_k)\|\sqrt{n_k} + \|\vmu(\cS_{k'})\|\sqrt{n_{k'}}}.
       \end{align}
\end{restatable}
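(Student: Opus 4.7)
The plan is to reduce the multi-class optimization to a collection of independent two-class subproblems, each of which is already solved by Proposition \ref{prop:max_deg_2}. The structural observation driving the reduction is that both the objective and the constraint set decompose cleanly over unordered pairs of classes.

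First I would rearrange the double sum to make this separability manifest. For any ordered pair $(k, k')$ with $k \neq k'$, the summand $\Pi(\theta_{k,k'}; k) + \Pi(\theta_{k',k}; k')$ is symmetric under swapping the roles of $k$ and $k'$, so the total objective can be written as
\begin{align}
\sum_{\{k, k'\}} 2\left(\Pi(\theta_{k,k'}; k) + \Pi(\theta_{k',k}; k')\right),
\end{align}
where the outer sum runs over unordered pairs. Crucially, the two angles $\theta_{k,k'}$ and $\theta_{k',k}$ appear in no other pair's summand, and the constraint $\theta_{k,k'} + \theta_{k',k} = \psi$ with both angles positive couples only these two variables. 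Hence the joint problem splits into $\binom{K}{2}$ independent two-variable subproblems, each identical in form to the optimization treated in Proposition \ref{prop:max_deg_2}.

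Next I would apply Proposition \ref{prop:max_deg_2} to each unordered pair $\{k, k'\}$. Within the admissible region in which Eq.\,(\ref{eq:prop1_theta}) holds for both angles, that proposition delivers a unique maximizer given by Eq.\,(\ref{eq:max_ans}). Assembling these pairwise optima across all $\{k, k'\}$ produces a candidate point $\Theta^*$ whose coordinates are exactly Eq.\,(\ref{eq:max_ans_multi}); positivity and the sum-to-$\psi$ constraint are inherited pair by pair.

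Finally, I would lift uniqueness from the subproblems to the joint problem: since the objective is a sum of blocks with disjoint variable supports and disjoint constraints, any global maximizer in the specified admissible region must be a maximizer in each block, and the per-block maximizer is unique by Proposition \ref{prop:max_deg_2}. No substantial new difficulty arises beyond this bookkeeping; the main analytic content, namely the calculus argument maximizing $\Pi(\thetaA; k) + \Pi(\thetaB; k')$ under the angular constraint, has already been absorbed into Proposition \ref{prop:max_deg_2}, so the chief task here is simply to verify that the multi-class objective genuinely decomposes pairwise.
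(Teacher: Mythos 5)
Your proposal is correct and follows essentially the same route as the paper: the paper's proof simply notes that $\max_{\Theta}\sum_{k}\sum_{k'\neq k}\bigl(\Pi(\thetaA;k)+\Pi(\thetaB;k')\bigr) \le \sum_{k}\sum_{k'\neq k}\max_{\thetaA,\thetaB}\bigl(\Pi(\thetaA;k)+\Pi(\thetaB;k')\bigr)$ and invokes Proposition \ref{prop:max_deg_2}, which is exactly your observation that the objective and constraints separate over pairs with disjoint variable supports, so the pairwise optima assemble into the unique joint maximizer.
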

    
These theories suggest replacing the linear layer with a 1-vs-1 multi-class classifier with decision boundaries of $\Theta^*$. To generalize, we introduce a hyperparameter $\gamo$, and define $\thetaA^*(\gamo) = \psi \frac{\|\vmu(\cS_{k'})\|{n_{k'}}^{\gamo}}{\|\vmu(\cS_k)\|{n_k}^{\gamo} + \|\vmu(\cS_{k'})\|{n_{k'}}^{\gamo}}$, with $\Theta^*_{\gamo} \equiv \{\thetaA^*(\gamo) \mid k \neq k' \in \cY\}$. Note that $\Theta^*_{\gamo} = \Theta^*$ holds when $\gamo = \frac{1}{2}$. We refer to the 1-vs-1 multi-class classifier that classifies based on this $\Theta^*_{\gamo}$ as \ours.
See Appendix \ref{app:alg} for the specific algorithm of \ours.

\subsection{MLA solves LTR problems with decision boundaries akin to \ours }\label{sec:main_approximate_mla}
 We demonstrate that MLA addresses the LTR problem. Since Section \ref{sec:main_1v1} proves \ours\ has the optimal decision boundaries for LTR, we verify MLA operates similarly to \ours.
MLA can be viewed as a technique that multiplicatively adjusts the cosine similarity between features and the linear layer weight vectors. This increases the probability of classifying samples into tail classes.  Although our theory in Section \ref{sec:main_1v1} aims to find the optimal decision boundaries between classes, it can be connected to MLA by considering the magnitude of logits under simple assumptions.

For the following discussion, we assume that $K$ is sufficiently large. Under this condition, it holds that $\psi = \arccos\left(-\frac{1}{K - 1}\right) \to \frac{\pi}{2}$, allowing us to assume that $\cos(\thetaA^*(\gamo)), \cos(\thetaB^*(\gamo)) > 0$.

Let the MLA factor, multiplicatively applied to the logits of class $k$, be denoted by $\kappa_k$. In other words, we consider the case in which the logit $g_k(\vx)$ for class $k$ is adjusted to $\kappa_k g_k(\vx)$. We present that when adjusting the decision boundaries on the basis of \ours, $\kappa_k \propto n_k^{-\gamo}$ holds. Note that the ETF classifier ensures equal norms of the linear layer weight vectors across all classes. To adjust the decision boundaries to be equivalent to \ours\ by using MLA, we should adjust $\kappa_k$ such that the following holds for all $k \neq k' \in \cY$:
\begin{align}\label{eq:mla_equation1}
    \kappa_k \cos(\thetaA^*(\gamo)) = \kappa_{k'} \cos(\thetaB^*(\gamo)).
\end{align}
A problem arises here because there are only $K$ MLA factors, while they must satisfy $K(K-1)/2$ equations. Consequently, when $K > 2$, there is typically no solution. However, by considering the following approximation, it is possible to obtain an approximate solution. First, note that as $K \to \infty$, it holds that $\psi \to \frac{\pi}{2}$. Letting $\epsilon = \frac{\pi}{2} - \psi$, and considering the continuity of $\cos$, we can approximate and solve Eq.\,(\ref{eq:mla_equation1}) as follows:
\begin{align}
    \frac{\kappa_k}{\kappa_{k'}} &= \tan(\thetaA^*(\gamo)) \cos(\epsilon) - \sin(\epsilon) \label{eq:mla_equation2} \\
    &\sim \tan(\thetaA^*(\gamo)). \label{eq:mla_equation2sim}
\end{align}

Now, let $\tau_{k, k'}(\gamo) \equiv \frac{\|\vmu(\cS_k)\| n_k^{\gamo}}{\|\vmu(\cS_{k'})\| n_{k'}^{\gamo}}$, then we have:
\begin{align}
    \thetaA^*(\gamo) = \psi \frac{\|\vmu(\cS_{k'})\|{n_{k'}}^{\gamo}}{\|\vmu(\cS_k)\|{n_k}^{\gamo} + \|\vmu(\cS_{k'})\|{n_{k'}}^{\gamo}} = \psi \frac1{\tau_{k, k'}(\gamo) + 1} \sim \frac\pi2\frac1{\tau_{k, k'}(\gamo) + 1}.
\end{align}

Next, for $0 \leq \theta < 1$, we approximate $\tan\left(\frac{\pi}{2} \theta\right)$ using the rational function $\phi(\theta) = \frac{\theta}{1 - \theta}$. This approximation satisfies the following properties:
\begin{lemma} \label{lem:tan_appro_a}
\begin{align}
    \phi(0) &= \tan (0) = 0, \\
    \phi\ab(\frac12) &= \tan\ab(\frac\pi4) = 1, \\
    \lim_{\theta\to 1-0}\phi(\theta) &= \lim_{\theta\to 1-0} \tan\ab(\frac\pi2 \theta) = +\infty.
\end{align}
\end{lemma}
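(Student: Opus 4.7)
The plan is direct verification of each of the three claims, since the lemma merely records three point values and a boundary limit of the functions $\phi(\theta) = \theta/(1-\theta)$ and $\tan(\pi\theta/2)$. These are the minimal data needed to motivate $\phi$ as a rational approximant of $\tan(\pi\theta/2)$ on $[0,1)$ in the subsequent approximation of Eq.\,(\ref{eq:mla_equation2sim}).

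First I would handle the two point evaluations. Plugging $\theta = 0$ into $\phi(\theta) = \theta/(1-\theta)$ gives $0/1 = 0$, which matches $\tan(0) = 0$. At $\theta = 1/2$, I get $\phi(1/2) = (1/2)/(1 - 1/2) = 1$, which matches $\tan(\pi/4) = 1$ by the standard trigonometric value. Both are one-line computations with no subtlety.

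For the limiting statement, I would analyze each function separately. As $\theta \to 1^-$, the denominator $1 - \theta$ of $\phi$ tends to $0^+$ while the numerator tends to $1$, so $\phi(\theta) \to +\infty$. Under the same limit $\pi\theta/2 \to (\pi/2)^-$, so by the standard one-sided vertical asymptote of $\tan$ at $\pi/2$ we also have $\tan(\pi\theta/2) \to +\infty$. The only thing to keep track of is the one-sided nature of the limit, which guarantees $1 - \theta > 0$ and $\pi\theta/2 < \pi/2$, so both divergences are to $+\infty$ rather than $-\infty$; beyond that, there is essentially no obstacle, and the lemma serves purely to pin down the shared endpoint and midpoint behavior that justifies using $\phi$ in place of $\tan(\pi\theta/2)$ in Eq.\,(\ref{eq:mla_equation2sim}).
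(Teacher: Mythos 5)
Your direct verification is correct and is exactly the argument the paper has in mind; indeed the paper simply declares the proof of this lemma trivial and omits it. The point evaluations and the one-sided limits (with the sign of $1-\theta$ and the asymptote of $\tan$ at $\pi/2$ handled as you do) are all that is required.
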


\begin{restatable}{lemma}{LemmaTanAppro}    
\label{lem:tan_appro_b}
    Let $\theta \in [0, 1)$. Then, the following holds:
    \begin{align}
        \phi(\theta) \le &\tan\ab(\frac\pi2 \theta) < \frac\pi2 \phi(\theta) &\qquad  \ab(0 \le \theta \le \frac12 ), \\
        \frac2\pi  \phi(\theta) < &\tan\ab(\frac\pi2 \theta) < \phi(\theta) &\qquad \ab(\frac12 < \theta < 1).
    \end{align}
\end{restatable}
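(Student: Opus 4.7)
The plan is to reduce the four inequalities to just two via a reciprocal symmetry, and then to handle the reduced inequalities by elementary monotonicity arguments. Let $F(\theta) := \tan(\tfrac{\pi}{2}\theta)/\phi(\theta)$. Since $\phi(1-\theta)=(1-\theta)/\theta=1/\phi(\theta)$ and $\tan(\tfrac{\pi}{2}(1-\theta))=\cot(\tfrac{\pi}{2}\theta)=1/\tan(\tfrac{\pi}{2}\theta)$, one has $F(\theta)\,F(1-\theta)=1$. Consequently, the claim $\tfrac{2}{\pi}<F(\theta)<1$ on $(\tfrac{1}{2},1)$ is equivalent, under $\theta\mapsto 1-\theta$, to $1<F(1-\theta)<\tfrac{\pi}{2}$ on $(0,\tfrac{1}{2})$, which is contained in the first claim. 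Hence it suffices to prove $\phi(\theta)\le\tan(\tfrac{\pi}{2}\theta)<\tfrac{\pi}{2}\phi(\theta)$ for $\theta\in(0,\tfrac{1}{2}]$.

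After the substitution $x=\tfrac{\pi}{2}\theta$, with $x\in(0,\pi/4]$, the target becomes $\tfrac{2x}{\pi-2x}\le\tan x<\tfrac{\pi x}{\pi-2x}$. For the lower bound, I would rewrite it as $g(x):=x(\cot x+1)-\tfrac{\pi}{2}\le 0$. A direct computation yields $g''(x)=2\csc^2(x)\bigl(x\cot x-1\bigr)$, which is strictly negative on $(0,\pi/2)$ by the elementary inequality $\tan x>x$. Hence $g'$ is strictly decreasing; since $g'(\pi/4)=2-\pi/2>0$, one has $g'>0$ throughout $(0,\pi/4]$, so $g$ is strictly increasing there. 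The identity $g(\pi/4)=0$ then yields $g(x)\le 0$ throughout, with equality exactly at $x=\pi/4$.

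For the upper bound, I would rearrange $\tan x<\tfrac{\pi x}{\pi-2x}$ (after multiplying by $(\pi-2x)>0$ and dividing by $x\tan x>0$) to the form $\tfrac{1}{x}-\cot x<\tfrac{2}{\pi}$. Differentiating gives $\tfrac{d}{dx}\bigl(\tfrac{1}{x}-\cot x\bigr)=\csc^2 x-\tfrac{1}{x^2}>0$ on $(0,\pi/2)$ because $\sin x<x$. Thus $\tfrac{1}{x}-\cot x$ is strictly increasing on $(0,\pi/4]$, so its supremum there equals the value at $x=\pi/4$, namely $\tfrac{4}{\pi}-1=\tfrac{4-\pi}{\pi}$; this is strictly less than $\tfrac{2}{\pi}$ because $\pi>2$.

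The main obstacle is spotting the reciprocal symmetry $F(\theta)F(1-\theta)=1$, which halves the work and simultaneously explains the appearance of both constants $\tfrac{\pi}{2}$ and $\tfrac{2}{\pi}$. Without it, proving the upper bound on $(\tfrac{1}{2},1)$ directly is awkward, because both $\phi$ and $\tan(\tfrac{\pi}{2}\theta)$ blow up as $\theta\to 1$ and one must compare their rates. Once the reduction is in place, the remaining tasks amount to checking the sign of two second derivatives and evaluating each auxiliary function at a single endpoint.
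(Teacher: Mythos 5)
Your proof is correct, and it shares the paper's central device: both you and the paper define the ratio $h(\theta)=\tan\!\left(\tfrac{\pi}{2}\theta\right)/\phi(\theta)$ and exploit the reciprocal symmetry $h(\theta)h(1-\theta)=1$ to reduce everything to the interval $(0,\tfrac12]$. Where you diverge is in how the reduced statement $1\le h<\tfrac{\pi}{2}$ is established. The paper differentiates $h$ itself, shows via an auxiliary function $l(\theta)=\tfrac{\pi}{2}\tfrac{\theta(1-\theta)}{\cos^2(\pi\theta/2)}-\tan\!\left(\tfrac{\pi}{2}\theta\right)$ that $h$ is non-increasing, and then pins down the two endpoint values $h(\tfrac12)=1$ and $\lim_{\theta\to 0^+}h(\theta)=\tfrac{\pi}{2}$ (the latter by l'H\^opital). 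You instead substitute $x=\tfrac{\pi}{2}\theta$ and prove the two bounds separately: the lower bound via $g(x)=x(\cot x+1)-\tfrac{\pi}{2}$, using $g''<0$, $g'(\pi/4)>0$, $g(\pi/4)=0$; the upper bound via the increasing function $\tfrac1x-\cot x$ evaluated at $x=\pi/4$. Your route avoids differentiating the ratio and avoids l'H\^opital entirely, at the cost of two auxiliary functions instead of one; the paper's route gives as a by-product the stronger structural fact that $h$ is monotone on $(0,1)$, which explains why both constants $1$ and $\tfrac{\pi}{2}$ are sharp. Both are elementary calculus arguments of comparable length.

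Two small points of care. First, your reduction sentence says the claim $1<F(1-\theta)<\tfrac{\pi}{2}$ on $(0,\tfrac12)$ ``is contained in the first claim,'' but the first claim only asserts $1\le F$; the strict inequality you need on the open interval is supplied by your own lower-bound argument (equality in $g\le 0$ holds exactly at $x=\pi/4$), so you should invoke that strict version explicitly when transferring to $(\tfrac12,1)$. Second, you silently restrict to $\theta\in(0,\tfrac12]$ and never address $\theta=0$; there the strict upper inequality degenerates to $0<0$, so the statement itself is only meaningful at $\theta=0$ for the equality $\phi(0)=\tan 0=0$ — the paper waves this away as ``trivial'' in the same spirit, so this is a defect of the statement rather than of your argument, but a one-line remark would make your write-up match the claimed range $[0,1)$.
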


The proof of Lemma \ref{lem:tan_appro_a} is trivial and thus omitted. The proof of Lemma \ref{lem:tan_appro_b} can be found in Appendix \ref{app:proof_mla}. These lemmas suggest that the approximation $\phi(\theta) = \frac{\theta}{1 - \theta}$ for $\tan\left(\frac{\pi}{2} \theta\right)$ is particularly close when $\theta = \frac12$ or  $\tau_{k,k'}(\gamo) = 1$, but it provides a global approximation. This is especially suitable for LTR settings, where $\tau_{k, k'}(\gamo)$ tends to deviate significantly from $1$.

Using these results, we further approximate Eq.\,(\ref{eq:mla_equation2sim}) as:
\begin{align}
    \frac{\kappa_k}{\kappa_{k'}}  \sim  \tan(\thetaA^*(\gamo)) \sim \frac{\frac2\pi\thetaA^*(\gamo) }{1- \frac2\pi\thetaA^*(\gamo) }\sim \frac1{\tau_{k, k'}(\gamo) }.
\end{align}

Therefore, if we set $\kappa_k = \frac{\alpha}{\|\vmu(\cS_k)\| n_k^{\gamo}}$ for any $\alpha \in \mathbb{R}_{>0}$, we can approximately satisfy Proposition \ref{prop:max_deg_multi}. This corresponds to performing MLA with $\gamm = \gamo$, under the assumption of feature normalization.

\subsection{Discussion: Comparison of MLA with ALA}\label{sec:main_approximate_ala}
In Section \ref{sec:main_approximate_mla}, we demonstrate that MLA is grounded in the theoretical framework of decision boundary adjustment developed in Section \ref{sec:main_1v1}. 
We now turn our attention to ALA, presenting the issues that arise when applying the theory from Section \ref{sec:main_1v1} to ALA. This highlights the differences between MLA and ALA, and indicates the relative advantages of MLA.

ALA is inconsistent with the concept of optimal decision boundary adjustment as described in Proposition \ref{prop:max_deg_multi}. This is because the decision boundaries adjusted by ALA changes depending on the norm of each feature, $\|\vf(\vx)\|$. For instance, let's consider an instance $\vx$ whose logit of class $k$ is represented as $g_k(\vx) = \|\vf(\vx)\|\cos(\angle(\vf(\vx), \vw_k))$. The output of the classification label for $\vx$ depends on the magnitude of $\|\vf(\vx)\|\cos(\angle(\vf(\vx), \vw_k)) - \gama \log n_k$ for each $k$. In other words, the classification result depends on both the angle $\angle(\vf(\vx), \vw_k)$ and the norm $\|\vf(\vx)\|$. This implies that the optimal decision boundaries during inference are affected by the norm of individual features. This dependency conflicts with Proposition \ref{prop:max_deg_multi}, in which the optimal decision boundaries are independent of the norm of each feature.

On the other hand, suppose we assume that $\|\vf(\vx)\|$ is equal for all $\vx \in \cX$, ensuring that the angles of the decision boundaries remain constant for each sample. Following the same reasoning as in Section \ref{sec:main_approximate_mla}, we can also approximate \ours\ by ALA. However, compared to MLA, this approximation cannot hold globally when $\tau_{k,k'}(\gamo)$ varies significantly. This inconsistency with LTR settings, in which $\tau_{k,k'}(\gamo)$ can deviate considerably, makes ALA incompatible. See Appendix \ref{app:proof_ala} for details.

These points are corroborated by the subsequent experimental results, which demonstrate the substantial differences between the properties of MLA and ALA.

\section{Experiments}\label{sec:experiments}
We experimentally demonstrate that MLA is an appropriate approximation of \ours\ from two perspectives. First, we outline the experimental setup in Section \ref{sec:exp_settings}. Then, we demonstrate that MLA and \ours\ share similar decision boundaries in Section \ref{sec:exp_theta} and yield equivalent classification accuracy in Section \ref{sec:exp_acc}. Finally, we present insights into MLA hyperparameters in Section \ref{sec:exp_hypara}.
Code is available at \url{https://github.com/HN410/MLA-Approximates-NCDBA}.

\subsection{Settings}\label{sec:exp_settings}
Our experimental setup primarily follows \citet{Hasegawa2023ExploringWeight}. We used CIFAR10, CIFAR100 \citep{Krizhevsky2009LearningMultiple}, and ImageNet \citep{Deng2009ImageNetLargescale} as datasets. Following \citet{Cui2019ClassbalancedLoss} and \citet{Liu2019LargeScaleLongTailed}, we created long-tailed versions of these datasets, namely CIFAR10-LT, CIFAR100-LT, and ImageNet-LT, with an imbalance ratio of $\rho=100$. To demonstrate that this method applies to general modal data, not just images, we also used the tabular dataset Helena \citep{Guyon2019AnalysisAutoML}. We can also show the effectiveness of the method on real data using Helena because this is an inherently imbalanced dataset with $\rho \simeq 40$.

For the network architecture, we used ResNeXt50 \citep{Xie2017AggregatedResidual} for ImageNet-LT and ResNet34 \citep{He2016DeepResidual} for the other image datasets unless otherwise specified. For Helena, we used a multi-layer perceptron (MLP), following \citet{Kadra2021WelltunedSimple}. We adopted the ETF classifier as the linear classifier to promote NC and fix the linear classifier weights. Thus, only the parameters of the feature map were trained. We used cross-entropy loss as the loss function and applied weight decay \citep{Hanson1989ComparingBiases} and feature regularization as regularization techniques. Feature regularization was employed to promote NC \citep{Hasegawa2023ExploringWeight} and to prevent significant differences in the norms of the training features across classes, fulfilling the condition for MLA to approximate \ours. We tuned the hyperparameters $\gamo$, $\gama$, and $\gamm$ using validation datasets. For further detailed experimental settings, refer to Appendix \ref{app:experiments_setting}.

We examined the behavior of MLA, ALA, and \ours\ when applied post-hoc to models trained on the training datasets. Specifically, we aimed to assess whether MLA provides an effective approximation of \ours\ in terms of decision boundary adjustment and classification accuracy. ALA, another post-hoc LA method, was included as a benchmark for comparison.

\subsection{Decision Boundary Angles}\label{sec:exp_theta}
We investigate how the decision boundary angles between classes are adjusted. Similar to the analysis in Sections \ref{sec:main_approximate_mla} and \ref{sec:main_approximate_ala}, MLA and ALA can be interpreted as methods for adjusting the angles of the decision boundaries between classes. Let $\thetaA^\times$ and $\thetaA^+$ represent the angles of the decision boundaries between classes $k$ and $k'$ derived from MLA and ALA, respectively.

For MLA, the following holds from Eq.\,(\ref{eq:mla_equation2}): 
\begin{align}
    \thetaA^\times =  \arctan\ab(\frac{\ab(\frac{{n_{k'}}^{\gamm}}{{n_k}^{\gamm}} + \sin\ab(\frac\pi2 - \psi))}{\cos\ab(\frac\pi2 - \psi)}).
\end{align}

For ALA, we assume that $\|\vf(\vx)\|$ is equal across all instances $\vx$ and sufficiently large, denoted as $\|\vf\|$. Then, from Eqs.\,(\ref{eq:ala_eq1}) and (\ref{eq:ala_eq2}), the following holds:
\begin{align}
    \thetaA^+ = \frac\psi2 - \arcsin\ab(\frac{\gama\log\frac{n_k}{n_{k'}}}{2\|\vf\|\sin\ab(\frac\psi2)}).
\end{align}

We investigate the differences between these angles and $\thetaA^*$ used in \ours\ and demonstrate that MLA serves as a sufficient approximation of \ours. Figure \ref{fig:theta_cifar10100} displays heatmaps comparing $\thetaA^+ - \thetaA^*$ and $\thetaA^{\times} - \thetaA^*$ for CIFAR100-LT and CIFAR10-LT. Detailed experimental settings and results for other datasets are provided in Appendices \ref{app:experiments_setting} and \ref{app:experiments_theta}, respectively. The heatmaps representing the difference between MLA and \ours\ are generally faint on datasets with a sufficient number of classes, such as CIFAR100-LT. This means that $\thetaA^{\times} - \thetaA^*$ is globally small across all $k, k' \in \cY$. This is consistent with our theoretical predictions from Section \ref{sec:main_approximate_mla}, indicating that MLA is a sufficient approximation of \ours\ in large-class settings. The same phenomenon is observed for ImageNet-LT and Helena. Note that in these latter two datasets, this phenomenon occurs even though training accuracy is not sufficiently high and NC is not fully realized (see Table \ref{tab:hypara}). This suggests that our theory holds under more relaxed conditions.

On CIFAR10-LT, which has fewer classes, the decision boundaries of both MLA and ALA differ significantly from those of \ours. This may be because the number of classes is so small that approximations such as $\psi \sim \frac\pi2$ are not valid. However, even in this case, MLA can still be regarded as an approximate method in that it achieves comparable test accuracy to \ours. See Section \ref{sec:exp_acc}.

\begin{figure}
    \begin{center}
    \begin{minipage}[b]{0.48\linewidth}
    \includegraphics[width=1.0\linewidth]{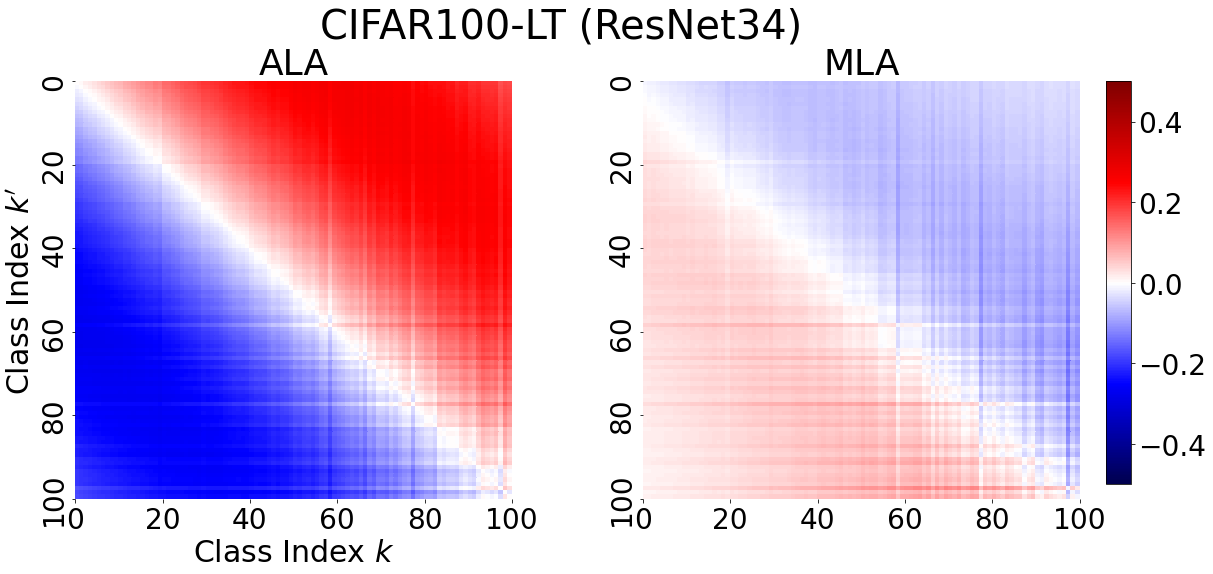}
    \end{minipage}    
    \begin{minipage}[b]{0.48\linewidth}
    \includegraphics[width=1.0\linewidth]{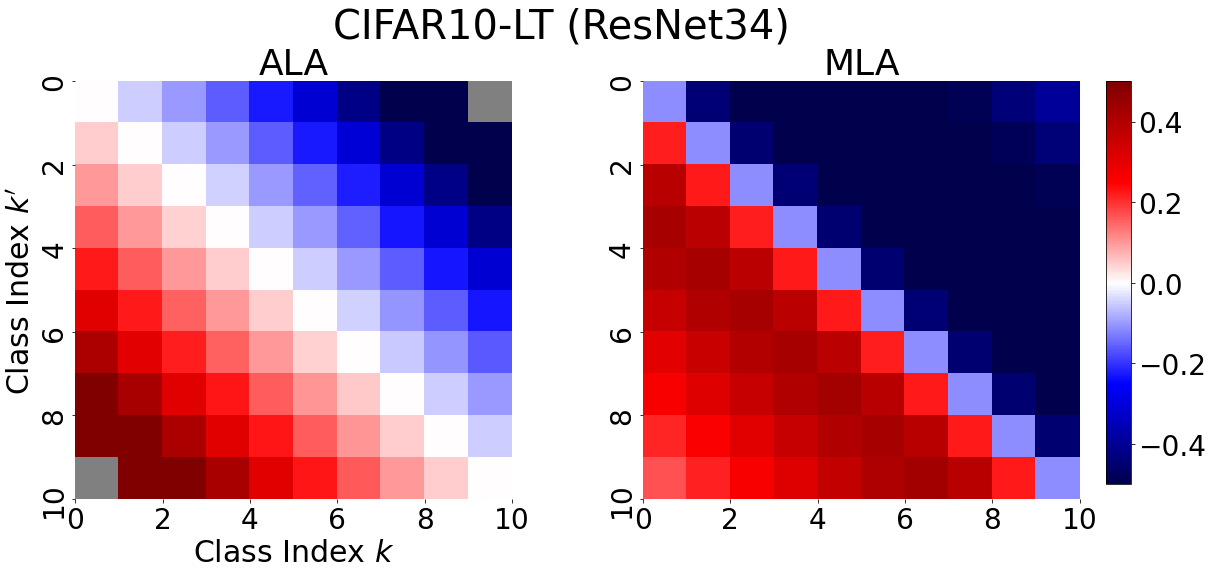}
    \end{minipage}
    \caption{Heatmaps showing difference in angles of decision boundaries between each method and \ours. On the left is the result for CIFAR100-LT, and on the right is the result for CIFAR10-LT. The left side of each figure displays $\thetaA^+ - \thetaA^*$, while the right side displays $\thetaA^{\times} - \thetaA^*$. Values that became NaN are shown in gray. In the case of CIFAR100-LT, the angle differences between MLA and \ours\ are generally small.}
    \label{fig:theta_cifar10100}
    \end{center}
\end{figure}

\begin{figure}
    \begin{center}
    \includegraphics[width=1.0\linewidth]{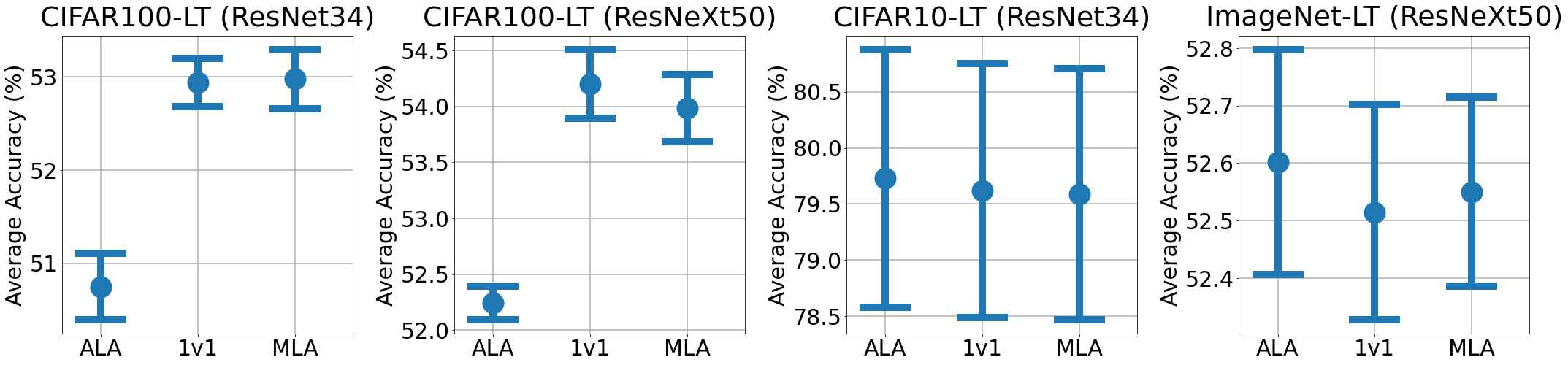}
    \caption{Average accuracy of each model trained on each dataset and adjusted by different methods. The error bars represent the mean and standard deviation across five trials with different seed values. 1v1 is short for \ours. MLA and \ours\ consistently achieve comparable accuracy.}
    \label{fig:acc_errbar_cifar}
    \end{center}
\end{figure}

\subsection{Test Accuracy} \label{sec:exp_acc}
We compare the test accuracy when each method is applied post-hoc to trained models.
Figure \ref{fig:acc_errbar_cifar} displays the average test accuracy with error bars for each model trained on each dataset and adjusted by different methods.
The results on Helena and the detailed accuracy table can be found in Appendix \ref{app:experiments_acc}.
 In all experiments, MLA can achieve high average accuracy comparable to \ours. Remarkably, this approximation holds even in cases where the assumptions mentioned in Section \ref{sec:main_analysis} do not hold, such as when the number of classes is small (CIFAR10-LT) or when training accuracy is low and NC has not sufficiently occurred (ImageNet-LT and Helena). In contrast, ALA achieves lower accuracy than \ours\ in some cases. ALA is based on Fisher consistency \citep{Menon2020LongtailLearning}, assuming that the number of training samples is sufficiently large. While our theory also assumes a certain level of training sample size, it has been shown to work well even with more practical sample sizes. In addition, MLA and \ours\ achieve comparable accuracy to ALA even under unfavorable conditions, such as experiments on CIFAR10-LT and ImageNet-LT, where the assumptions of our theory do not strictly hold. These results demonstrate that MLA is a robust and practical method for improving test accuracy, even in real-world scenarios where ideal theoretical conditions are not met.

\subsection{Hyperparameters}\label{sec:exp_hypara}
Table \ref{tab:hypara} summarizes the optimal values of MLA and \ours\ hyperparameters, $\gamm^*$ and $\gamo^*$, and the training accuracy for models trained on each training dataset.

On CIFAR10-LT, which has a small number of classes, the values of $\gamm^*$ and $\gamo^*$ are significantly different. This is probably because the approximation of \ours\ to MLA does not hold well when the number of classes is small. 

From here on, we focus on cases in which the number of classes is sufficiently large. On CIFAR100-LT, where training accuracy has reached $100\%$, the optimal value is slightly higher than the theoretically derived value of 0.5. This suggests that when training reaches terminal phase and NC occurs, the feature spread can be bounded to a smaller order than $\tilde{\cO}\ab(1/{\sqrt{n_k}})$. For instance, recent research has aimed to bound generalization error to a smaller order than $\tilde{\cO}\ab(1/{\sqrt{n_k}})$ \citep{Wei2019ImprovedSample}, and this may be related.

On Helena and ImageNet-LT, where training accuracy has not reached $100\%$, the optimal values can be smaller than 0.5. This is likely because the feature spread between classes does not differ as much compared to when NC sufficiently occurs. Even in such cases, MLA and \ours\ can achieve high test accuracy by adjusting hyperparameters (see Section \ref{sec:exp_acc} and Appendix \ref{app:experiments_acc}). These results are expected to be key insights when searching for the optimal hyperparameters for MLA.

\begin{table}
    \centering
    \caption{Optimal hyperparameters and training accuracy for each model and dataset. For models that achieve 100\% training accuracy, $\gamm^*$ and $\gamo^*$ are slightly higher than 0.5.}
    \label{tab:hypara}
    \begin{tabular}{llccc}
    \toprule
    Dataset & Model      &  $\gamm^*$  & $\gamo^*$  & Training Accuracy (\%) \\  \cmidrule(r){1-2} \cmidrule(l){3-5}
CIFAR100-LT & ResNet34 & $ 0.710\pm0.037$  & $0.770\pm0.060$ & 100.0 \\
CIFAR100-LT & ResNeXt50 & $  0.770\pm0.087$  & $ 1.010\pm0.058$ & 100.0 \\
CIFAR10-LT & ResNet34 & $ 0.780\pm0.518$  & $0.410\pm0.120$ & 100.0 \\
ImageNet-LT & ResNeXt50 & $0.200\pm0.000$  & $0.250\pm0.000$ & 86.6 \\
Helena & MLP & $  0.480\pm0.093$  & $  0.580\pm0.060$ & 42.1 \\

    \bottomrule
    \end{tabular}
\end{table}

\section{Conclusion}
We provide a solid theoretical foundation for MLA, a simple heuristic method in LTR, and demonstrate its utility. First, on the basis of NC, we estimate the feature spread for each class and present a theory that derives the optimal method for adjusting the decision boundaries. Furthermore, we indicate that MLA is an approximation of this method and theoretically guarantees improvements in test accuracy. This provides clear insights into the conditions under which MLA is effective and how logits should be adjusted quantitatively. Additionally, through experiments, we demonstrate that the approximation holds under more relaxed conditions and that MLA exhibits practical usefulness by achieving accuracy comparable to or better than ALA. 
This research forms a crucial foundation for future advancements in LTR and imbalanced learning methods.

One limitation of this study is the lack of a more in-depth theoretical analysis of ALA. While the current theory does not align with ALA, there may be approaches or assumptions that could guarantee its performance with a practical number of samples. Future research could explore this direction. Another potential avenue is the combination of MLA with other methods to develop more sophisticated methods and achieve state-of-the-art improvements in accuracy.

\newpage
\section*{Acknowledgments} 
We extend our heartfelt gratitude to everyone who supported us in conducting this research. 

We would like to thank the students of our lab and our fellow graduate friends for their insightful discussions, which greatly contributed to shaping the direction and resolving the challenges of this research. Their enthusiastic debates were crucial to the progress of this work.

We also express our deep gratitude to the reviewers of this paper. Their detailed and constructive comments significantly improved the structure and logical development of the paper, greatly enhancing the quality of this research.

This work was supported by JSPS KAKENHI Grant Number JP24H00709.
\bibliography{main}
\bibliographystyle{iclr2025_conference}

\newpage
\appendix

\section{Acronym and Notation Table} \label{app:acronym_notation}
Table \ref{tab:acronym} lists the acronyms referenced throughout this paper, and Tables \ref{tab:notation_op}, \ref{tab:notation_data}, and \ref{tab:notation_models}  provide a summary of the notations used in this paper.

\begin{table}[H]
\caption{Acronym table}
\label{tab:acronym}
\begin{center}
  \centering
  \begin{tabular}{l c}
  \toprule
  Abbreviation   & Definition \\ \midrule
  1v1 & \ours \\
  ALA & Additive logit adjustment \citep{Menon2020LongtailLearning} \\
  ETF & Equiangular tight frame \\
  i.i.d & Independent and identically distributed \\
  LA & Logit adjustment \\
  LTR & Long-tailed recognition \\
  MLA & Multiplicative logit adjustment \citep{Kim2020AdjustingDecision} \\
  MLP & Multi-layer perceptron \\
  NC & Neural collapse \citep{Papyan2020PrevalenceNeural} \\
  ReLU & Rectified linear unit \\
  \bottomrule
  \end{tabular}
\end{center}
\end{table}

\begin{table}[H]
\begin{center}
\caption{Notation table for operators and set}
\label{tab:notation_op}
\begin{tabular}{p{3cm} c p{9cm} }
\toprule
$\angle$ & & The angle between the two vectors \\
$\|\cdot\|$ & & The Euclidean norm for vectors \\
$\|\cdot\|_2$, $\|\cdot\|_F$ & & The spectral norm and Frobenius norm for matrices \\
$\tilde{\cO}$ & & The Landau notation, ignoring logarithmic terms \\
$\phi$ & & The approximation function for $\tan$, i.e., $\phi(\theta) = \frac{\theta}{1 - \theta}$ \\
$\Avg_{\vx, \vx' \in \cS_k} h(\vx, \vx')$ & & The average over pairs of different elements  \\
$[i]$ & & The set from $1$ to $i$, i.e., $\{1, \ldots, i\}$ \\
\bottomrule
\end{tabular}
\end{center}
\end{table}

\begin{table}[H]
\begin{center}
\caption{Notation table for data}
\label{tab:notation_data}
\begin{tabular}{p{3cm} c p{9cm} }
\toprule
$\cX$, $\cY$  & & An instance space and a label space\\
$\vx$, $y$ & & An instance and a label \\
$K$ & & The number of classes, i.e., $|\cY|$ \\
$p$ & & The instance dimension \\
$P$, $P_k$ & & A sample distribution and a class-conditional distribution \\
$\cS$, $\cS_k$ & & A training dataset and a class-specific training dataset \\
$\tilde{\cS}_k$ & & A class-specific test dataset  \\
$N$, $n_k$ & & The size of $\cS$ and $\cS_k$\\
$\tilde{n}_k$ & & The size of $\tilde{\cS}_k$ \\
$\rho$ & & An imbalance factor, $\rho \equiv \frac{n_1}{n_K} = \frac{\max_k n_k}{\min_k n_k}$\\
$\rademv$, $\sigma_j$ & &  Rademacher random variables\\ 
\bottomrule
\end{tabular}
\end{center}
\end{table}

\begin{table}[H]
\begin{center}
\caption{Notation table for models}
\label{tab:notation_models}
\begin{tabular}{p{3cm} c p{9.5cm} }
\toprule
$\vf$, $\overline{\vf}$& & A feature map and a ReLU neural feature map\\
$\|\vf\|$ & & The value of $\|\vf(\vx)\|$, assuming that $\|\vf(\vx)\|$ is equal for all $\vx \in \cX$ \\
$\cF$, $\overline{\cF}$, $\overline{\cF}^M$ & & The set of feature maps, ReLU neural feature maps, and ReLU neural feature maps with constraints on $M$ \\ 
$\vh$, $\overline{\vh}$& &  The output of the second-to-last layer of the feature map, and that of the ReLU neural feature map \\
$\vg$, $g_k$ & & A logit and the logit for class $k$ \\
$\mW$, $\vw_k$ & & The weights of a linear classifier and its vector for class $k$\\
$\mW^s$ & & The $s$-th linear layer from the end of the feature map \\
$r$ & & The rank of $\mW^1$  \\
$s_l, \vu_l, \vv_l$ & & The $l$-th singular value, left singular vector, and right singular vector of $\mW^1$, from its singular value decomposition  \\
$\cH$ & &  The set of $\vh$\\
$\cH_l$, $\overline{\cH}_l^M$ & &  The set of $(\vv_l, \vh)$ for $\vf \in \cF$ and set of $(\vv_l, \overline{\vh})$ for $\overline{\vf} \in \overline{\cF}^M$ \\
$\mathfrak{R}_{n_k}(\cH_l)$, $\hat{\mathfrak{R}}_{\cS_k}(\cH_l)$ & & The Rademacher complexity and the empirical Rademacher complexity for $\cH_l$ (see Section \ref{sec:main_preliminaries}) \\
 $\cC(\cH_l, \cX)$ & &  The complexity of $\cH_l$ and $\cX$\\
 $\bar{\cC}(\cF, \cX)$& &  The mean of $\cC(\cH_l, \cX)$ for $l \in [r]$ \\
$q$ & & The depth of the feature map \\
$d$, $d_1$ & & The dimension of $\vf(\vx)$ and the dimension of $\vh(\vx)$ \\
$B$ & &  The upper bound of $\vh$, i.e., $\sup_{\vx \in \cX, \vh \in \cH} \|\vh(\vx)\| \leq B$  \\
$M$ & & The upper bound of $\prod_{s=2}^{q}\|\mW^s\|_F $\\
$\vmu_{\vf}(P), \vmu(P)$ & & The expectation of $\vf(\vx)$ for $\vx \sim P$\\
$\vmu_{\vf}(\cS), \vmu(\cS)$ & & The mean of $\vf(\vx)$ for $\vx \in \cS$\\
$\overline{\vmu}(P),  \overline{\vmu}(\cS)$ & & Same as $\vmu_{\overline{\vf}}(P)$ and $\vmu_{\overline{\vf}}(\cS)$\\
$\Pi(\theta; k)$, $\overline{\Pi}(\theta; k)$ & & The angular bound probability for $\vf$ and for $\overline{\vf}$ (see Section \ref{sec:main_1v1}). \\
$\gamo$,  $\gamm$, $\gama$ & & Hyperparameters for \ours, MLA, and ALA \\
$\thetaA, \thetaA^*, \thetaA^*(\gamo)$ & & The angle of the decision boundary, the optimal angle, and the optimal angle adjusted by $\gamo$  \\
$\thetaA^\times, \thetaA^+$ & & The angles of the decision boundaries derived from MLA and ALA \\
$\Theta, \Theta^*$, $\Theta^*_{\gamo}$ & & The set of $\thetaA$, $\thetaA^*$, and $\thetaA^*(\gamo)$\\
$\psi$ & & The angle between the weight vectors of an ETF classifier, i.e., $\psi = \arccos\left(-\frac{1}{K-1}\right)$ \\ 
$\tau_{k,k'}$ & & The ratio $\frac{\|\vmu(\cS_k)\| n_k^{\gamo}}{\|\vmu(\cS_{k'})\| n_{k'}^{\gamo}}$ \\
$\vm_{k, k'}$ & & The normal vector of the decision boundary between $k$ and $k'$ \\
\bottomrule
\end{tabular}
\end{center}
\end{table}

\section{Related Work}\label{app:relatedwork}

\subsection{Multiplicative Logit Adjustment}
MLA has been validated for its effectiveness across various studies. \citet{Hasegawa2023ExploringWeight} demonstrated that MLA performs on par with or better than ALA under realistic conditions. Additionally, \citet{Ye2022IdentifyingCompensating} highlighted significant differences in the feature space occupied by each class in LTR, confirming the effectiveness of class-dependent temperatures, which is equivalent to MLA. Beyond image classification, various studies suggest the utility of MLA-like adjustments in other fields. For instance, in object detection, \citet{Alexandridis2023InverseImage} proposed an MLA variant based on object occurrence frequency, called inverse image frequency. Similarly, in text retrieval tasks, the inverse document frequency \citep{Salton1988TermweightingApproaches}, which adjusts the score function by multiplying the inverse frequency of words in a document, is widely used. Thus, MLA is not only applicable to LTR in classification problems but also serves as a fundamental technique with potential applications across a broad range of domains.

\section{Proof}\label{app:proof}
Here, we present the detailed proofs and supplementary explanations for the theories proposed in this paper. Appendix \ref{app:proof_main} contains the proofs for the propositions discussed in Section \ref{sec:main_1v1}. Appendix \ref{app:proof_relu} provides specific examples of applying our theory to ReLU neural networks. Appendix \ref{app:proof_mla} includes the proof of the lemma presented in Section \ref{sec:main_approximate_mla}.
\subsection{Main Analysis}\label{app:proof_main}
For all $k$ where $n_k > 1$, the operation of averaging a pair of elements $\vx, \vx' \in \cS_k$ is defined as follows:
\begin{align}
    \Avg_{\vx, \vx' \in \cS_k} l(\vx, \vx') \equiv \frac1{n_k(n_k-1)}\sum_{\substack{\vx_{j_1} \in \cS_k \\ \vx_{j_2} \in \cS_k \setminus \{\vx_{j_1}\}}} l(\vx_{j_1}, \vx_{j_2}).
\end{align}

First, we propose two lemmas required for the proof of the propositions in Section \ref{sec:main_1v1}. Then, we prove the propositions using the lemmas.

\begin{lemma}\label{lemma:feature_bound1}
Suppose $n_k > 2$. For any $\vf \in \cF, \delta > 0$, the following holds with a probability of at least $1-\delta$:
\begin{align}
    \E_{\vx, \vx'\sim P_k}\ab[ \|\vf(\vx) - \vf(\vx')\|] &- \|\mW^1\|_2 \sum_{l=1}^r\Avg_{\vx, \vx'\in \cS_k}\ab|\vv_l^\top (\vh(\vx) - \vh(\vx'))| \nonumber \\
    &\qquad <   \frac{r\|\mW^1\|_2 }{\sqrt{n_k}} \ab(4{\bar{\cC}(\cF, \cX)}+ 4B + B\sqrt{2\log \frac r{\delta}}) \label{eq:feature_bound1}.
\end{align}

\end{lemma}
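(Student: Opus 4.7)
The proof plan is: (1) use the SVD of $\mW^1$ to reduce pairwise Euclidean distances to sums of scalar projections, (2) for each projection direction $l$, apply a U-statistic uniform convergence bound combining Rademacher complexity with a bounded-differences concentration inequality, and (3) finish with a union bound over $l \in [r]$.

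For step (1), writing $\mW^1 = \sum_{l=1}^r s_l \vu_l \vv_l^\top$ and using orthonormality of the $\vu_l$ together with $s_l \le \|\mW^1\|_2$, the triangle inequality yields the pointwise bound
\begin{align}
\|\vf(\vx)-\vf(\vx')\| = \Bigl\|\sum_{l=1}^r s_l\bigl(\vv_l^\top(\vh(\vx)-\vh(\vx'))\bigr)\vu_l\Bigr\| \le \|\mW^1\|_2 \sum_{l=1}^r \bigl|\vv_l^\top(\vh(\vx)-\vh(\vx'))\bigr|,
\end{align}
and taking expectations over $P_k\times P_k$ reduces the lemma to a per-$l$ uniform estimate
\begin{align}
\sup_{(\vv_l,\vh)\in\cH_l}\Bigl(\E_{\vx,\vx'}|\vv_l^\top(\vh(\vx)-\vh(\vx'))| - \Avg_{\vx,\vx'\in\cS_k}|\vv_l^\top(\vh(\vx)-\vh(\vx'))|\Bigr) \le \frac{4\cC(\cH_l,\cX) + 4B + B\sqrt{2\log(r/\delta)}}{\sqrt{n_k}}
\end{align}
holding with probability at least $1-\delta/r$.

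For step (2), the empirical pair-average is an order-2 U-statistic with a symmetric kernel bounded by $2B$, so I would combine a Rademacher symmetrization with McDiarmid's bounded-differences inequality. Symmetrization, followed by Hoeffding's decomposition of the U-statistic into an average of complete matchings of $\lfloor n_k/2\rfloor$ IID kernel evaluations, together with the Ledoux--Talagrand contraction for the $1$-Lipschitz absolute value and the splitting $|\vv_l^\top(\vh(\vx)-\vh(\vx'))|\le|\vv_l^\top\vh(\vx)|+|\vv_l^\top\vh(\vx')|$, bounds the expected supremum by roughly $4\mathfrak{R}_{n_k}(\cH_l) \le 4\cC(\cH_l,\cX)/\sqrt{n_k}$. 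For the tail bound, swapping one sample $\vx_i$ changes each of the $2(n_k-1)$ pairs it appears in by at most $2B$, giving per-coordinate bounded differences of $4B/n_k$; McDiarmid's inequality then yields the $B\sqrt{2\log(r/\delta)}/\sqrt{n_k}$ term with confidence $\delta/r$ allocated per direction, and the residual $4B/\sqrt{n_k}$ absorbs the standard slack between the expected supremum and the high-probability deviation. Summing the per-$l$ estimates by a union bound and using $\sum_{l=1}^r \cC(\cH_l,\cX) = r\bar{\cC}(\cF,\cX)$ then produces the stated inequality.

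The main obstacle is carrying out step (2) while preserving both the clean constants and the parametric $1/\sqrt{n_k}$ rate despite the U-statistic structure. Naively one might lose a factor of $\sqrt{2}$ from reducing to IID pairs of size $\lfloor n_k/2\rfloor$, and pick up extra Lipschitz constants from $|\cdot|$ and from expanding the pairwise difference, so the bookkeeping has to be organized so that these losses either cancel with the symmetrization constants or are absorbed into the additive $4B/\sqrt{n_k}$ slack. Once step (2) is executed cleanly, the SVD reduction and the union bound are routine.
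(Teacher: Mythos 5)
Your overall architecture coincides with the paper's: the SVD/triangle-inequality reduction to $\|\mW^1\|_2\sum_{l=1}^r|\vv_l^\top(\vh(\vx)-\vh(\vx'))|$, a per-$l$ uniform deviation bound for the pair-average via Rademacher symmetrization plus McDiarmid, and a union bound over $l\in[r]$ with $\sum_{l}\cC(\cH_l,\cX)=r\bar{\cC}(\cF,\cX)$. The substantive difference is inside your step (2), and it is precisely where the worry you flag materializes: as organized, the bookkeeping does not reach the stated constants, and neither loss can be absorbed by the $4B/\sqrt{n_k}$ slack. First, the Hoeffding reduction to $\lfloor n_k/2\rfloor$ i.i.d. kernel evaluations costs a factor $\sqrt{2}$ on the complexity term (you get roughly $4\sqrt{2}\,\cC(\cH_l,\cX)/\sqrt{n_k}$), and this excess scales with $\cC(\cH_l,\cX)$, not with $B$, so an additive $B$-slack cannot cover it. Second, with your bounded differences $4B/n_k$ (which is the honest worst case for a symmetric kernel averaged over ordered pairs), McDiarmid yields a deviation of $2B\sqrt{2\log(r/\delta)}/\sqrt{n_k}$, twice the stated term, and this excess grows with $\log(r/\delta)$, so no constant slack absorbs it either; note the $4B/\sqrt{n_k}$ in the target is already consumed by the splitting term in the expectation bound, leaving essentially no room.

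The paper avoids both leaks by a slightly different execution of the same idea: it does not split the sample but symmetrizes the complete pair-average directly against a ghost sample with one Rademacher variable per index, then uses a contraction-type step to drop the absolute value and the splitting of $\vv_l^\top(\vh(\vx_j)-\vh(\vx_{j'}))$ to isolate $n_k\,\mathfrak{R}_{n_k}(\cH_l)$ plus a term $B\,\E_{\rademv}\big|\sum_j\sigma_j\big|\le B\sqrt{n_k}$; the resulting prefactor $\tfrac{2n_k}{n_k-1}<4$ (using $n_k>2$) is exactly what produces the $4\bar{\cC}(\cF,\cX)+4B$ constants. In the McDiarmid step the paper uses a per-coordinate bound of $2B/n_k$, counting only the $(n_k-1)$ ordered pairs in which the replaced sample occupies the first slot (your stricter $4B/n_k$ accounting would enlarge that term even in the paper's route). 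So your plan does prove the lemma at the same $\tilde{\cO}(1/\sqrt{n_k})$ rate but with strictly larger constants; to obtain the inequality exactly as stated you should replace the Hoeffding split by the paper's direct symmetrization of the full pair-average and match its bounded-difference accounting.
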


\begin{proof}
    We derive this lemma following the proof of Theorem 3.3 in \citet{10.5555/2371238}.
    By the triangle inequality, the following holds:
    \begin{align}
        \|\vf(\vx) - \vf(\vx')\|  &=  \ab\|\sum_{l=1}^r s_l\vu_l\vv_l^\top (\vh(\vx) - \vh(\vx'))\| \nonumber \\
        &\le \sum_{l=1}^r s_l\|\vu_l\||\vv_l^\top (\vh(\vx) - \vh(\vx'))| \nonumber \\
        &\le \ab\|\mW^1\|_2 \sum_{l=1}^r |\vv_l^\top (\vh(\vx) - \vh(\vx'))|. \label{eq:lem3_norm2abs}
    \end{align}
    We use this inequality to bound $\E_{\vx, \vx'\sim P_k}\ab[\|\vf(\vx) - \vf(\vx')\|]$. Then, for each $l \in [r]$, we calculate the probabilistic upper bound of $\E_{\vx, \vx'\sim P_k}\ab[|\vv_l^\top (\vh(\vx) - \vh(\vx'))|] - \Avg_{\vx, \vx'\in \cS_k}\ab|\vv_l^\top (\vh(\vx) - \vh(\vx'))|$ and apply the union bound to complete the proof.

    Note that $\E_{\vx, \vx'\sim P_k}\ab[|\vv_l^\top (\vh(\vx) - \vh(\vx'))|] = \E_{\cS_k \sim P_k^{n_k}} \ab[\Avg_{\vx, \vx'\in \cS_k}\ab|\vv_l^\top (\vh(\vx) - \vh(\vx'))|]$.
    For $\cS_k^1, \cS_k^2 \sim P_k^{n_k}$, define $\Phi_l(\cS_k^1)$ as follows:
    \begin{align}
        \Phi_l(\cS_k^1) = \sup_{(\vv_l, \vh) \in \cH_l}\ab( \E_{\vx, \vx'\sim P_k}\ab[|\vv_l^\top (\vh(\vx) - \vh(\vx'))|] - \Avg_{\vx, \vx'\in \cS_k^1}\ab|\vv_l^\top (\vh(\vx) - \vh(\vx'))|).
    \end{align}
    Let $\cS_k^1, \cS_k^2 \sim P_k^{n_k}$ be datasets that differ only in the $j$-th sample $\vx_j^1, \vx_j^2$, and define $\Phi_l(\cS_k^2)$ similarly as $\Phi_l(\cS_k^1)$. The difference between $\Phi_l(\cS_k^1)$ and $\Phi_l(\cS_k^2)$ can be upper-bounded as follows:
    \begin{align}
        \Phi_l(\cS_k^1) - \Phi_l(\cS_k^2) &\le \sup_{(\vv_l, \vh) \in \cH_l}\Big(\Avg_{\vx, \vx'\in \cS_k^1}\ab|\vv_l^\top (\vh(\vx) - \vh(\vx'))| \nonumber \\
        &\hspace{6em} - \Avg_{\vx, \vx'\in \cS_k^2}\ab|\vv_l^\top (\vh(\vx) - \vh(\vx'))|\Big) \nonumber\\ 
        &= \frac1{n_k(n_k-1)}\sum_{i\neq j}\sup_{(\vv_l, \vh) \in \cH_l} \ab(|\vv_l^\top (\vh(\vx_j^1) - \vh(\vx_i))| - |\vv_l^\top (\vh(\vx_j^2) - \vh(\vx_i))|) \nonumber \\
        &\le \frac1{n_k(n_k - 1)}\sum_{i\neq j}\sup_{(\vv_l, \vh) \in \cH_l} \ab(|\vv_l^\top (\vh(\vx_j^1) - \vh(\vx_j^2))|) \nonumber \\
        &\le \frac2{n_k}\sup_{\vx\in\cX, (\vv_l, \vh) \in \cH_l}\|\vh(\vx)\|  \label{eq:feature_bound_dif_bound} \\
        &\le \frac{2B}{n_k} .
    \end{align}
    Therefore, using McDiarmid's inequality, we can prove that for any $\delta > 0$ the following holds with probability at least $1 - \delta$:
    \begin{align}
        \Phi_l(\cS_k) \le \E_{\cS_k}\ab[\Phi_l(\cS_k)] + B\sqrt{\frac{2\log\frac{1}{\delta}}{n_k}}. \label{eq:lem3_mcd_upb}
    \end{align}
    Note that for any $\vf \in \cF$, the following inequality holds:
    \[
    \E_{\vx, \vx'\sim P_k}\ab[|\vv_l^\top (\vh(\vx) - \vh(\vx'))|] - \Avg_{\vx, \vx'\in \cS_k^1}\ab|\vv_l^\top (\vh(\vx) - \vh(\vx'))| \le \Phi_l(\cS_k).
    \]
    Using Eq.\,(\ref{eq:lem3_norm2abs}), we can derive the following:
    \begin{align}
        \text{Left hand side of Eq.\,(\ref{eq:feature_bound1})} \le \|\mW^1\|_2\sum_{l=1}^r \Phi_l(\cS_k). \label{eq:lem3_sum_phi}
    \end{align}
    Thus, we aim to take a union bound of Eq.\,(\ref{eq:lem3_mcd_upb}) over all $l$ to derive an upper bound.
    
    Next, we bound $\E_{\cS_k}\ab[\Phi_l(\cS_k)]$:
    \allowdisplaybreaks[2]
    \begin{align}
        \E_{\cS_k}\ab[\Phi_l(\cS_k)] &= \E_{\cS_k}\left[\sup_{(\vv_l, \vh) \in \cH_l}\big( \E_{\vx, \vx'\sim P_k}\ab[|\vv_l^\top (\vh(\vx) - \vh(\vx'))|] \right.\nonumber  \\
            &\hspace{8em} \left.\vphantom{\sup_{(\vv_l, \vh) \in \cH_l}}\left. - \Avg_{\vx, \vx'\in \cS_k}\ab|\vv_l^\top (\vh(\vx) - \vh(\vx'))|\right)\right] \nonumber \\
        &\le  \E_{\cS_k, \hat{\cS_k}}\left[\sup_{(\vv_l, \vh) \in \cH_l}\Big( \Avg_{\vx, \vx'\in \hat{\cS_k}}\ab|\vv_l^\top (\vh(\vx) - \vh(\vx'))| \right. \nonumber \\
         &\hspace{9em} \left .\vphantom{\sup_{(\vv_l, \vh) \in \cH_l}}- \Avg_{\vx, \vx'\in \cS_k}\ab|\vv_l^\top (\vh(\vx) - \vh(\vx'))|\Big)\right] \nonumber \\
        &=  \E_{\cS_k, \hat{\cS_k}}\left[\sup_{(\vv_l, \vh) \in \cH_l}\frac1{n_k(n_k -1)} \left(\ab(\sum_{\substack{\hat{\vx}_{j} \in \hat{\cS_k} \\ \hat{\vx}_{j'} \in \hat{\cS_k} \setminus \{\hat{\vx}_j\}}}  |\vv_l^\top (\vh(\hat{\vx}_j) - \vh(\hat{\vx}_{j'}))|) \right.\right. \nonumber \\
        &\hspace{12em}  \left.\left.- \ab(\sum_{\substack{\vx_{j} \in \cS_k \\ \vx_{j'} \in \cS_k \setminus \{\vx_j\}}}|\vv_l^\top (\vh({\vx}_j) - \vh({\vx}_{j'}))|)\right)\right] \nonumber \\
         &=  \E_{\rademv, \cS_k, \hat{\cS_k}}\left[\sup_{(\vv_l, \vh) \in \cH_l}\frac1{n_k(n_k -1)} \left(\ab(\sum_{\substack{\hat{\vx}_{j} \in \hat{\cS_k} \\ \hat{\vx}_{j'} \in \hat{\cS_k} \setminus \{\hat{\vx}_j\}}}  \sigma_j|\vv_l^\top (\vh(\hat{\vx}_j) - \vh(\hat{\vx}_{j'}))|) \right.\right. \nonumber \\
        &\hspace{12em}  \left.\left.- \ab(\sum_{\substack{\vx_{j} \in \cS_k \\ \vx_{j'} \in \cS_k \setminus \{\vx_j\}}}\sigma_j|\vv_l^\top (\vh({\vx}_j) - \vh({\vx}_{j'}))|)\right)\right] \nonumber \\
        &\le 2 \E_{\rademv, \cS_k}\ab[\sup_{(\vv_l, \vh) \in \cH_l}\frac1{n_k(n_k -1)} \sum_{\substack{{\vx}_{j} \in {\cS_k} \\ {\vx}_{j'} \in {\cS_k} \setminus \{\hat{\vx}_j\}}}  \sigma_j|\vv_l^\top (\vh({\vx}_j) - \vh({\vx}_{j'}))|] \nonumber \\
        &= 2 \E_{\rademv, \cS_k}\ab[\sup_{(\vv_l, \vh) \in \cH_l}\frac1{n_k(n_k -1)} \sum_{\substack{{\vx}_{j} \in {\cS_k} \\ {\vx}_{j'} \in {\cS_k} }}  \sigma_j|\vv_l^\top (\vh({\vx}_j) - \vh({\vx}_{j'}))|] \nonumber \\
        &\le \frac2{n_k -1} \E_{\cS_k}\ab[\sum_{{\vx}_{j'} \in {\cS_k} }  \E_{\rademv}\ab[\sup_{(\vv_l, \vh) \in \cH_l}\frac1{n_k} \sum_{{\vx}_{j} \in {\cS_k}} \sigma_j|\vv_l^\top (\vh({\vx}_j) - \vh({\vx}_{j'}))|]] \nonumber \\
        &\le \frac2{n_k -1} \E_{\cS_k}\ab[\sum_{{\vx}_{j'} \in {\cS_k} }  \E_{\rademv}\ab[\sup_{(\vv_l, \vh) \in \cH_l}\frac1{n_k} \sum_{{\vx}_{j} \in {\cS_k}} \sigma_j\vv_l^\top (\vh({\vx}_j) - \vh({\vx}_{j'}))]] \label{eq_trans:ledoux} \\
        &\le \frac{2n_k}{n_k -1} \E_{\rademv, \cS_k}\ab[\sup_{(\vv_l, \vh) \in \cH_l}\frac1{n_k} \sum_{{\vx}_{j} \in {\cS_k}} \sigma_j\vv_l^\top\vh({\vx}_j)] \nonumber \\
        &\quad + \frac2{n_k(n_k -1)} \E_{\cS_k}\ab[\sum_{{\vx}_{j'} \in {\cS_k} } \sup_{(\vv_l, \vh) \in \cH_l}|\vv_l^\top \vh({\vx}_{j'})|] \E_{\rademv}\ab[\ab|\sum_{j=1}^{n_k} -\sigma_j|]  \nonumber \\
        &\le \frac2{n_k-1}\ab(n_k {\mathfrak{R}}_{n_k}\ab({\cH}_l) + B\E_{\rademv}\ab[\ab|\sum_{j=1}^{n_k} \sigma_j|]). \label{eq:lem3_phi_radem}
    
    \end{align}
    The transformation in Eq.\,(\ref{eq_trans:ledoux}) is similar to that of Talagrand's contraction lemma \citep{Ledoux1991ProbabilityBanach}.
    
    In addition, the following inequality holds:
    \begin{align}
        \E_{\rademv}\ab[\ab|\sum_{j=1}^{n_k} \sigma_j|] &= \sqrt{\E_{\rademv}\ab[\ab|\sum_{j=1}^{n_k} \sigma_j|]^2} \nonumber \\
        &\le \sqrt{\E_{\rademv}\ab[\ab|\sum_{j=1}^{n_k} \sigma_j|^2]} \nonumber \\
        &= \sqrt{\E_{\rademv}\ab[\sum_{j=1}^{n_k} \sigma_j^2 + 2\sum_{j\neq j'} \sigma_j\sigma_{j'}]} \nonumber \\
        &= \sqrt{n_k}. \label{eq:lem3_abs_radem_exp}
    \end{align}
    
    From Eqs.\,(\ref{eq:lem3_mcd_upb}), (\ref{eq:lem3_phi_radem}), and (\ref{eq:lem3_abs_radem_exp}), it is proven that following holds with probability at least $1-\delta$,
    \begin{align}
        \Phi_l(\cS_k) &\le \frac{2n_k}{n_k-1}\ab({\mathfrak{R}}_{n_k}\ab({\cH}_l) + \frac B{\sqrt{n_k}}) + \frac B{\sqrt{n_k}}\sqrt{2\log \frac1{\delta}}\nonumber \\
        &\le  \frac{2n_k}{(n_k-1)\sqrt{n_k}} \ab({ \cC(\cH_l, \cX)} + B ) + \frac B{\sqrt{n_k}}\sqrt{2\log \frac1{\delta}}\nonumber \\
        &<  \frac{1}{\sqrt{n_k}} \ab(4{ \cC(\cH_l, \cX)} + 4B + B\sqrt{2\log \frac1{\delta}}).\label{eq:lem3_feature_bound_exp_radem}
    \end{align}

    Substituting Eq.\,(\ref{eq:lem3_feature_bound_exp_radem}) into Eq.\,(\ref{eq:lem3_sum_phi}) and taking the union bound for all $l$ proves this lemma.

\end{proof}

\begin{lemma}\label{lem:degree_bound}
Suppose $n_k > 2$. For $\vf \in \cF$, assume that for all $\vx \in \cS_k$, $\vf(\vx) = \vmu(\cS_k)$ holds. For any $0<\delta, \delta'< 1$, define $\theta$ as follows:
\begin{align}
    \theta \equiv \frac\pi{2\delta'}\frac{r\|\mW^1\|_2 }{\sqrt{n_k}\|\vmu(\cS_k)\|} \ab(4{ \bar{\cC}(\cF, \cX)} + 4B + B\sqrt{2\log \frac r{\delta}}).
\end{align}
Assume that $\delta, \delta'$ satisfy $\theta \le \frac{\pi}{2}$.  For $\vx \sim P_k$, the probability that the angle between $\vf(\vx)$ and $\mu(\cS_k)$ is less than $\theta$ is at least $1-(\delta + \delta')$.
\end{lemma}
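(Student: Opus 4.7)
The plan is to combine Lemma~\ref{lemma:feature_bound1} with Markov's inequality and the elementary concavity bound $\sin\theta \ge 2\theta/\pi$ on $[0, \pi/2]$. First, I would exploit the hypothesis $\vf(\vx) = \vmu(\cS_k)$ for all $\vx \in \cS_k$ to zero out the empirical-average term in Lemma~\ref{lemma:feature_bound1}: the hypothesis gives $\mW^1(\vh(\vx) - \vh(\vx')) = 0$ for every training pair $\vx, \vx' \in \cS_k$, and since the left singular vectors $\vu_l$ of $\mW^1$ are orthonormal with $s_l > 0$, each projection $\vv_l^\top(\vh(\vx) - \vh(\vx'))$ must vanish on $\cS_k$. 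Hence $\Avg_{\vx, \vx' \in \cS_k}|\vv_l^\top(\vh(\vx) - \vh(\vx'))| = 0$ for every $l \in [r]$, and Lemma~\ref{lemma:feature_bound1} (invoked with failure parameter $\delta$) yields, with probability at least $1 - \delta$,
\[
\E_{\vx, \vx' \sim P_k}[\|\vf(\vx) - \vf(\vx')\|] < \frac{r\|\mW^1\|_2}{\sqrt{n_k}}\ab(4\bar{\cC}(\cF, \cX) + 4B + B\sqrt{2 \log (r/\delta)}).
\]

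Next I would convert this pairwise-distance bound into a bound on $\E_{\vx \sim P_k}[\|\vf(\vx) - \vmu(\cS_k)\|]$ by invoking the collapse identity $\vmu(\cS_k) = \vf(\vx_j)$ for any $\vx_j \in \cS_k$: writing $\vmu(\cS_k) = \frac{1}{n_k}\sum_{\vx_j \in \cS_k}\vf(\vx_j)$ and applying Jensen's inequality to the norm identifies the centered expectation with a quantity of the same $\tilde{\cO}(1/\sqrt{n_k})$ order as above. Then, for $\theta \in (0, \pi/2]$, the geometric observation $\angle(\vf(\vx), \vmu(\cS_k)) \ge \theta \Rightarrow \|\vf(\vx) - \vmu(\cS_k)\| \ge \|\vmu(\cS_k)\|\sin\theta \ge (2\theta/\pi)\|\vmu(\cS_k)\|$ combined with Markov's inequality gives
\[
\Pr_{\vx \sim P_k}\ab[\angle(\vf(\vx), \vmu(\cS_k)) \ge \theta] \le \frac{\pi\,\E_\vx[\|\vf(\vx) - \vmu(\cS_k)\|]}{2\theta\,\|\vmu(\cS_k)\|}.
\]
Setting the right-hand side equal to $\delta'$ and solving for $\theta$ reproduces exactly the formula in the statement, and a union bound over the $1-\delta$ Lemma~\ref{lemma:feature_bound1} event and the $1-\delta'$ Markov event yields the claimed $1 - (\delta + \delta')$.

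The main obstacle will be the middle conversion step: a naive Jensen's argument only links $\E_{\vx, \vx'\sim P_k}[\|\vf(\vx) - \vf(\vx')\|]$ to $\E_{\vx}[\|\vf(\vx) - \vmu(P_k)\|]$, not to the sample-mean version $\E_\vx[\|\vf(\vx) - \vmu(\cS_k)\|]$. I expect the cleanest resolution to be an asymmetric variant of Lemma~\ref{lemma:feature_bound1} in which one index is drawn from $P_k$ and the other from $\cS_k$, proved by the same McDiarmid plus Rademacher machinery but using the collapse identity in place of the vanishing $\Avg$ term; alternatively, an auxiliary vector-concentration bound on $\|\vmu(P_k) - \vmu(\cS_k)\|$ of order $\tilde{\cO}(1/\sqrt{n_k})$ would also close the gap, provided the constants are tracked carefully so that the final bound matches the exact form of $\theta$ given in the statement.
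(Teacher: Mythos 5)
Your overall route coincides with the paper's: you kill the empirical term of Lemma~\ref{lemma:feature_bound1} using the hypothesis $\vf(\vx)=\vmu(\cS_k)$ on $\cS_k$ together with the linear independence of the left singular vectors of $\mW^1$, bound an expected distance to $\vmu(\cS_k)$, apply Markov's inequality, convert distance to angle via $\sin\theta\ge 2\theta/\pi$ (equivalent to the paper's use of $\arcsin(\tfrac{2}{\pi}\theta)\le\theta$), and finish with a union bound over the two events. However, the step you explicitly leave open --- passing from $\E_{\vx,\vx'\sim P_k}[\|\vf(\vx)-\vf(\vx')\|]$ to control of $\|\vf(\vx)-\vmu(\cS_k)\|$ --- is a genuine gap as written, and the two fixes you sketch (an asymmetric variant of Lemma~\ref{lemma:feature_bound1}, or a separate concentration bound on $\|\vmu(P_k)-\vmu(\cS_k)\|$) are heavier than necessary and would introduce extra terms, so they would not reproduce the exact constants in the stated $\theta$ without additional bookkeeping.

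The paper closes this gap with a one-line triangle-inequality argument: take the expectation \emph{jointly} over $\vx\sim P_k$ and $\cS_k\sim P_k^{n_k}$, write $\vf(\vx)-\vmu(\cS_k)=\frac{1}{n_k}\sum_{\vx'\in\cS_k}\left(\vf(\vx)-\vf(\vx')\right)$, and use the triangle inequality together with the fact that each $\vx'\in\cS_k$ is marginally distributed as $P_k$ and independent of $\vx$; this yields $\E_{\vx\sim P_k,\,\cS_k\sim P_k^{n_k}}\left[\|\vf(\vx)-\vmu(\cS_k)\|\right]\le\E_{\vx,\vx'\sim P_k}\left[\|\vf(\vx)-\vf(\vx')\|\right]$ with no new lemma and no loss in constants. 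Correspondingly, Markov's inequality must then be applied to $\|\vf(\vx)-\vmu(\cS_k)\|$ over the joint randomness of $(\vx,\cS_k)$ --- not, as in your display, to the probability over $\vx$ alone with $\cS_k$ fixed, since the conditional expectation $\E_{\vx\sim P_k}\left[\|\vf(\vx)-\vmu(\cS_k)\|\right]$ given $\cS_k$ is precisely the quantity you could not bound. With that substitution, the union bound over the Lemma~\ref{lemma:feature_bound1} event (probability at least $1-\delta$ over $\cS_k$) and the Markov event (probability at least $1-\delta'$) gives $1-(\delta+\delta')$, and your angle-conversion step, which matches the paper's, completes the proof with exactly the $\theta$ of the statement.
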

\begin{proof}

    We bound $\E_{\vx\sim P_k, \cS_k \sim P_k^{n_k}}\ab[{\|\vf(\vx) - \vmu(\cS_k)\|}]$ and apply Markov's inequality. Expanding this,
    \begin{align}
        \E_{\vx\sim P_k, \cS_k \sim P_k^{n_k}}\ab[{\|\vf(\vx) - \vmu(\cS_k)\|}] &= \frac1{n_k}\E_{\vx\sim P_k, \cS_k \sim P_k^{n_k}}\ab[{\ab\|n_k\vf(\vx) - \sum_{\vx' \in \cS_k}\vf(\vx')\|}] \nonumber \\
        &\le \frac1{n_k}\E_{\vx\sim P_k, \cS_k \sim P_k^{n_k}}\ab[{\sum_{\vx' \in \cS_k}\ab\|\vf(\vx) - \vf(\vx')\|}] \nonumber \\
        &= \E_{\vx, \vx'\sim P_k}\ab[\ab\|\vf(\vx) - \vf(\vx')\|].
    \end{align}
    Using Markov's inequality, for any $\delta' > 0$, with probability at least $1-\delta'$,
    \begin{align}
        \|\vf(\vx) - \vmu(\cS_k)\| &\le \frac1{\delta'} {\E_{\vx\sim P_k, \cS_k \sim P_k^{n_k}}\ab[{\|\vf(\vx) - \vmu(\cS_k)\|}] }\nonumber \\
        &= \frac1{\delta'} {\E_{\vx, \vx'\sim P_k}\ab[\ab\|\vf(\vx) - \vf(\vx')\|]}. \label{eq:bound_for_f_mu}
    \end{align}

    Before applying Lemma \ref{lemma:feature_bound1} to this case, we demonstrate that the second term on the left-hand side of Eq.\,(\ref{eq:feature_bound1}) is zero in this situation. When $\vf(\vx) = \vmu(\cS_k)$ satisfies for all $\vx \in \cS_k$, $\vf(\vx) = \vf(\vx')$ holds for all $\vx, \vx' \in \cS_k$. Then, we have $\sum_{l=1}^r s_l \vu_l\vv_l^\top(\vh(\vx) - \vh(\vx')) = \boldsymbol{0}$.
    Since $\vu_l$ are the left singular vectors of $\mW^1$, they are linearly independent for different $l$. Therefore, for all $l \in [r]$ and for all $\vx, \vx' \in \cS_k$,
    \begin{align}
        \vv_l^\top(\vh(\vx) -\vh(\vx')) = 0. \label{eq:int_feature_zero}
    \end{align}

    Then, we combine the equation Eqs.\,(\ref{eq:bound_for_f_mu}) and (\ref{eq:int_feature_zero}) with the result of Lemma \ref{lemma:feature_bound1} by union bound. For any $\delta, \delta' > 0$, with probability at least $1-(\delta + \delta')$, the following holds:
    \begin{align}
         \|\vf(\vx) - \vmu(\cS_k)\| &< \frac1{\delta'}\frac{r\|\mW^1\|_2 }{\sqrt{n_k}} \ab(4{ \bar{\cC}(\cF, \cX)} + 4B + B\sqrt{2\log \frac r{\delta}}) \nonumber \\
         &\equiv \frac{2}{\pi}\|\vmu(\cS_k)\|\theta.
    \end{align}
   Since $1-(\delta + \delta')$ is a lower bound for the probability, even restricting $\delta, \delta'< 1$ does not compromise the usefulness of the theorem.

    The angle between $\vf(\vx)$ and $\vmu(\cS_k)$ can be expressed as $\arcsin\ab(\frac{\|\vf(\vx) - \vmu(\cS_k)\|}{\|\vmu(\cS_k)\|})$ when $ \frac{\|\vf(\vx) - \vmu(\cS_k)\|}{\|\vmu(\cS_k)\|} \le 1$. Since $\arcsin(t)$ is strictly monotonically increasing for $0\le t\le 1$,
    when we have $\theta \leq  \frac{\pi}{2}$, the upper bound of the angle between $\vf(\vx)$ and $\vmu(\cS_k)$ can be written as $\arcsin\ab(\frac{\|\vf(\vx) - \vmu(\cS_k)\|}{\|\vmu(\cS_k)\|}) < \arcsin{\ab(\frac{2}{\pi}\theta)}\le \theta$. This means that the upper bound of the angle between the features and the average training features of the class is less than $\theta$ with probability at least $1-(\delta + \delta')$.
    
\end{proof}

\PropFeature*

\begin{proof}
        Consider the following $\delta, \delta' > 0$:
    \begin{align}
        \delta &= \frac\pi{2\theta}\frac{r\|\mW^1\|_2}{\sqrt{n_k}\|\vmu(\cS_k)\|},  \\
        \delta' &= \frac\pi{2\theta}\frac{r\|\mW^1\|_2}{\sqrt{n_k}\|\vmu(\cS_k)\|} \ab(4{ \bar{\cC}(\cF, \cX)} + 4B + B\sqrt{2\log \frac {2\theta\sqrt{n_k}\|\vmu(\cS_k)\|}{\pi \|\mW^1\|_2}}).
    \end{align}
    Given $B\ge 1$ and the condition on $\theta$,
    \begin{align}
        \delta <  \delta' < \frac\pi{2\theta}\frac{r\|\mW^1\|_2}{\sqrt{n_k}\|\vmu(\cS_k)\|} \ab(4{ \bar{\cC}(\cF, \cX)} + 4B + B\sqrt{2\log \frac {\sqrt{n_k}\|\vmu(\cS_k)\|}{ \|\mW^1\|_2}})< 1.
     \end{align}
    Defining $\theta$ as above, we get $\theta = \frac\pi{2\delta'}\frac{r\|\mW^1\|_2 }{\sqrt{n_k}\|\vmu(\cS_k)\|} \ab(4{ \bar{\cC}(\cF, \cX)} + 4B + B\sqrt{2\log \frac r{\delta}})$. 
    Therefore, by Lemma \ref{lem:degree_bound}, the probability that the angle between $\vf(\vx)$ and $\vmu(\cS_k)$ is less than $\theta$ is at least
    \begin{align}\label{eq:bigpi}
        1-(\delta + \delta') = 1-\frac\pi{2\theta}\frac{r\|\mW^1\|_2}{\sqrt{n_k}\|\vmu(\cS_k)\|} \ab(4{ \bar{\cC}(\cF, \cX)} + 4B + 1+ B\sqrt{2\log \frac {2\theta\sqrt{n_k}\|\vmu(\cS_k)\|}{\pi \|\mW^1\|_2}}).
    \end{align}

\end{proof}

\PropProbTwo*
\begin{proof}
    Noting that $\thetaB = \psi - \thetaA$, we have: 
    \begin{align}
        \pdv{}{\thetaA} \ab(\Pi(\thetaA;k) + \Pi(\thetaB;k'))  = \left.\pdv{\Pi(\theta;k)}{\theta}\right|_{\theta=\thetaA} -   \left.\pdv{\Pi(\theta;k')}{\theta}\right|_{\theta = \psi - \thetaB}.
    \end{align}
    Since $\thetaA^*, \thetaB^* > 0$ and $\thetaA^* + \thetaB^* = \psi$, with $\thetaA^* \|\vmu(\cS_k)\|\sqrt{n_k} = \thetaB^* \|\vmu(\cS_{k'})\|\sqrt{n_{k'}}$, the following holds:
    \begin{align}
        \left.\pdv{\Pi(\theta;k)}{\theta}\right|_{\theta=\thetaA^*} =   \left.\pdv{\Pi(\theta;k')}{\theta}\right|_{\theta = \psi - \thetaB^*}.
    \end{align}
    Thus, we have $ \pdv{}{\thetaA} \ab(\Pi(\thetaA;k) + \Pi(\thetaB;k')) = 0$ when $\thetaA = \thetaA^*$ and $\thetaB = \thetaB^*$. 

\newcommand{\zetaA}{\zeta_k(\thetaA)}

    Next, we prove that this is the only solution that gives the maximum value within the aforementioned range. We have: 
    \begin{align}
        \pdv[order=2]{}{\thetaA} \ab(\Pi(\thetaA;y) + \Pi(\theta_{y'};y'))  = \pdv[order=2]{\Pi(\thetaA;y)}{\thetaA} +   \left.\pdv[order=2]{\Pi(\theta;y')}{\theta}\right|_{\theta = \psi - \thetaA}.
    \end{align}
    Here, we define $\alpha_k \equiv  \frac\pi{2}\frac{r\|\mW^1\|_2}{\sqrt{n_k}\|\vmu(\cS_k)\|} , \beta \equiv 4{ \bar{\cC}(\cF, \cX)} + 4B + 1$, and $\zetaA = 2\log \ab(\frac{r\thetaA}{\alpha_k})$. Note that $\pdv{\zetaA}{\thetaA} = \frac2{\thetaA}$. Then, the following holds: 

    \begin{align}
        \pdv[order=2]{\Pi(\thetaA;k)}{\thetaA} &= \pdv[order=2]{}{\thetaA}\ab(1-\frac{\alpha_k}{\thetaA} \ab(\beta + B\sqrt{\zetaA})) \nonumber \\
        &= \pdv{}{\thetaA}\ab(\frac{\alpha_k}{\thetaA^2}\ab(\beta + B\ab\zetaA^{\frac12}) - \frac{B\alpha_k}{\thetaA^2}\ab\zetaA^{-\frac{1}{2}} ) \nonumber \\
        &= \pdv{}{\thetaA}\ab(\frac{\alpha_k}{\thetaA^2}\ab(\beta + B\ab\zetaA^{\frac12} - B\ab\zetaA^{-\frac{1}{2}} )) \nonumber \\
        &= -\frac{2\alpha_k}{\thetaA^3}\ab(\beta + B\ab\zetaA^{\frac12} - B\ab\zetaA^{-\frac{1}{2}} ) \nonumber \\
        &\quad + \frac{\alpha_k}{\thetaA^2}\ab( \frac{B}{\thetaA}\ab\zetaA^{-\frac{1}{2}} +\frac{B}{\thetaA} \ab\zetaA^{-\frac{3}{2}} ) \nonumber \\
        &= -\frac{\alpha_k}{\thetaA^3}\ab(2\beta + 2B\zetaA^{\frac{1}{2}} - 3B\zetaA^{-\frac{1}{2}} - B\zetaA^{-\frac{3}{2}} ).
    \end{align}

    From the condition, there exists $\delta < 1$ such that $\thetaA = \frac{\alpha_k}{\delta}$, implying $\frac{\thetaA}{\alpha_k} > 1$. Since $\zetaA > 2\log 2 > 2$, we have
    \begin{align}
        \pdv[order=2]{\Pi(\thetaA;k)}{\thetaA} &\le -\frac{\alpha_k}{\thetaA^3}\ab(2\beta + 2B\zetaA^{\frac{1}{2}} - 4B\zetaA^{-\frac{1}{2}})  \nonumber \\
        &=  -\frac{2\alpha_k}{\thetaA^3}\ab(\beta + B\zetaA^{-\frac{1}{2}}\ab(\zetaA - 2)) \nonumber \\
        &< 0.
    \end{align}
    Similarly, we can prove that $\left.\pdv[order=2]{\Pi(\theta;k')}{\theta}\right|_{\theta = \psi - \thetaA} < 0$.

    Therefore, Eq.\,(\ref{eq:max_ans}) is the only solution that gives the maximum value within the range where $\thetaA$ and $\thetaB$ satisfy Eq.\,(\ref{eq:prop1_theta}).
\end{proof}

\PropProbMulti*
\begin{proof}
    The following holds, and the result follows trivially from Proposition \ref{prop:max_deg_2}.
    \begin{align}
        \max_{\Theta} \sum_{k\in \cY}\sum_{k' \neq k} \Pi(\thetaA;k) + \Pi(\thetaB;k') \le  \sum_{k\in \cY}\sum_{k' \neq k} \max_{\thetaA, \thetaB} \Pi(\thetaA;k) + \Pi(\thetaB;k').
    \end{align}
\end{proof}

\subsection{Case of ReLU Neural Networks}\label{app:proof_relu}
In this section, we specifically demonstrate the values of $\cC(\cH_l, \cX)$ and $B$ when the feature map is restricted to ReLU neural feature maps. This allows us to validate the assumption $\mathfrak{R}_{n_k}(\cH_l) \leq { \frac{\cC(\cH_l, \cX)}{\sqrt{n_k}}}$ posed in this paper and to illustrate the practicality of the proposed theory.

Let $\overline{\cF}$ represent the set of ReLU neural feature map, and define a ReLU neural feature map as $\overline{\vf}(\vx) = \mW^1\max(\boldsymbol{0}, \mW^{2} \ldots \max(\boldsymbol{0}, \mW^q \vx)\ldots) \in \overline{\cF}$. Here, $q$ denotes the depth of the network, and the $\max$ operation is applied element-wise. We define the set $\overline{\cF}^M$ as the set of ReLU neural feature map where the product of the Frobenius norms from the first layer to the second-to-last layer is bounded by $M$. That is,

\begin{align}
    \overline{\cF}^M 
    &= \left\{\overline{\vf}' : \vx \mapsto 
    \mW^1\max(\boldsymbol{0},\mW^{2} \ldots \max(\boldsymbol{0}, \mW^q \vx)\ldots) \in \overline{\cF}\ |\ \prod_{s=2}^{q}\|\mW^s\|_F \le M \right\}.
\end{align}

We focus on functions that belong to $\overline{\cF}^M$. When discussing ReLU neural networks, we add an overline to distinguish them from the general case. For instance, $\overline{\vh}$ refers to a function that returns the output of the second-to-last layer of a ReLU neural feature map. Similarly, $\overline{\vmu}(\cS_k)$ represents the average of the features $\overline{\vf}(\vx)$ for the dataset $\cS_k$.

For such networks, we similarly define $\overline{\cH}_l^M$, their Rademacher complexity, and empirical Rademacher complexity as follows:
\begin{align}
    \overline{\cH}_l^M \equiv \{(\vv_l, \overline{\vh}) \ | \ \overline{\vf} \in \overline{\cF}^M, \overline{\vf} = \mW^1 \overline{\vh}, \ \vv_l \text{ is the } l\text{-th right singular vector of } \mW^1 \},
\end{align}
\begin{align}
    \mathfrak{R}_{n_k}\ab(\overline{\cH}_l^M) &\equiv \E_{\cS'_k \sim P_k^{n_k}, \rademv} \left[\sup_{\ab(\vv_l, \overline{\vh}) \in \overline{\cH}_l^M} \frac{1}{n_k} \sum_{\vx_j \in \cS'_k} \sigma_j \vv_l^\top \overline{\vh}(\vx_j) \right], \\
    \hat{\mathfrak{R}}_{\cS_k}\ab(\overline{\cH}_l^M) &\equiv \E_{\rademv} \left[\sup_{\ab(\vv_l, \overline{\vh}) \in \overline{\cH}_l^M} \frac{1}{n_k} \sum_{\vx_j \in \cS_k} \sigma_j \vv_l^\top \overline{\vh}(\vx_j) \right].
\end{align}

According to \citet{Golowich2018SizeIndependentSample}, the empirical Rademacher complexity for this family of functions is bounded by:
\begin{align}
    \hat{\mathfrak{R}}_{\cS_k}\ab(\overline{\cH}_l^M) &= \E_{\rademv}\ab[\sup_{\ab(\vv_l, \overline{\vh}) \in \overline{\cH}_l^M}\frac1{n_k}  \sum_{\vx_j \in \cS_k} \sigma_j \vv_l^\top \overline{\vh}(\vx_j)] \nonumber \\
    &\le \frac1{\sqrt{n_k}}\ab(\sqrt{2q\log 2} + 1) M \sup_{\vx \in \cX} \|\vx\| \nonumber \\
    &\le \frac1{\sqrt{n_k}}(1.5\sqrt{q}+1)M \sup_{\vx \in \cX}\|\vx \|. \label{eq:f_radem}
\end{align}

Taking the expectation over $\cS_k$, we can replace ${ \bar{\cC}(\overline{\cF}, \cX)} = (1.5\sqrt{q}+1)M \sup_{\vx \in \cX}\|\vx \|$. Since ${ \bar{\cC}(\overline{\cF}, \cX)}$ does not depend on $n_k$, this confirms the assumption. We can also replace $B=M\sup_{\vx \in \cX}\|\vx \|$ because $\sup_{\vx \in \cX, \vh \in \cH} \|\vh(\vx)\| \le M\sup_{\vx \in \cX}\|\vx \|$.

In this case, Lemmas \ref{lemma:feature_bound1} and \ref{lem:degree_bound} can be rewritten as follows. Since these are simple substitutions, we omit the proofs.

\begin{lemma}\label{lemma:feature_bound1_relu}
Suppose $n_k > 2$. For any $\overline{\vf} \in \overline{\cF}^M$ and any $\delta > 0$, the following holds with a probability at least $1-\delta$:
\begin{align}
    \E_{\vx, \vx'\sim P_k}\ab[ \ab\|\overline{\vf}(\vx) - \overline{\vf}(\vx')\|] &- \|\mW^1\|_2 \sum_{l=1}^r\Avg_{\vx, \vx'\in \cS_k}\ab|\vv_l^\top \ab(\overline{\vh}(\vx) - \overline{\vh}(\vx'))| \nonumber \\
    &\qquad <   \frac{r\|\mW^1\|_2 M\sup_{\vx \in \cX}\|\vx \| }{\sqrt{n_k}} \ab(6\sqrt{q} + 8 + \sqrt{2\log \frac r{\delta}}).
\end{align}
\end{lemma}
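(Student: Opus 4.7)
The plan is to obtain Lemma \ref{lemma:feature_bound1_relu} as a direct corollary of Lemma \ref{lemma:feature_bound1}, instantiated with the quantitative bounds that ReLU neural feature maps provide for the complexity term $\bar{\cC}(\cF, \cX)$ and the boundedness constant $B$. Since $\overline{\cF}^M \subset \cF$, the hypotheses of Lemma \ref{lemma:feature_bound1} are already met, so all that remains is to identify concrete values for these two constants that are uniform over $\overline{\cF}^M$.

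First, I would establish the boundedness constant. For $\overline{\vf} = \mW^1 \overline{\vh} \in \overline{\cF}^M$, submultiplicativity of the spectral/Frobenius norms together with the 1-Lipschitzness of the ReLU activation gives $\|\overline{\vh}(\vx)\| \le \prod_{s=2}^{q}\|\mW^s\|_F \cdot \|\vx\| \le M \sup_{\vx \in \cX}\|\vx\|$. Hence one may take $B = M\sup_{\vx \in \cX}\|\vx\|$ (which is $\geq 1$ provided $M\sup_\vx \|\vx\| \geq 1$, consistent with the assumption $B \geq 1$ used in the main text).

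Second, I would identify the complexity constant. The Rademacher bound from \citet{Golowich2018SizeIndependentSample} recalled in Eq.\,(\ref{eq:f_radem}) shows $\hat{\mathfrak{R}}_{\cS_k}(\overline{\cH}_l^M) \le \frac{(1.5\sqrt{q}+1)M \sup_{\vx \in \cX}\|\vx\|}{\sqrt{n_k}}$, and taking expectation over $\cS_k$ preserves this bound for $\mathfrak{R}_{n_k}(\overline{\cH}_l^M)$. Thus $\cC(\overline{\cH}_l^M, \cX) = (1.5\sqrt{q}+1)M \sup_{\vx \in \cX}\|\vx\|$ is a valid choice, which is independent of $l$, so $\bar{\cC}(\overline{\cF}^M, \cX)$ equals the same value.

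Finally, I would plug these two constants into the right-hand side of Lemma \ref{lemma:feature_bound1}. A direct computation yields $4\bar{\cC}(\overline{\cF}^M,\cX) + 4B = M\sup_{\vx \in \cX}\|\vx\|\bigl(4(1.5\sqrt{q}+1) + 4\bigr) = M\sup_{\vx \in \cX}\|\vx\|(6\sqrt{q} + 8)$, and the remaining term becomes $B\sqrt{2\log(r/\delta)} = M\sup_{\vx \in \cX}\|\vx\|\sqrt{2\log(r/\delta)}$. Factoring out $M\sup_{\vx \in \cX}\|\vx\|$ gives precisely the stated bound. There is no substantive obstacle here: the only thing to check carefully is the arithmetic combining $4(1.5\sqrt{q}+1)+4 = 6\sqrt{q}+8$ and the absorption of $B$ and $\bar{\cC}$ into the common prefactor $M\sup_{\vx \in \cX}\|\vx\|$. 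This is why the authors remark that the proof is a simple substitution.
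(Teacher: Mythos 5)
Your proposal is correct and matches the paper's intended argument: the paper omits the proof precisely because it is the "simple substitution" you carry out, namely applying Lemma \ref{lemma:feature_bound1} with $B = M\sup_{\vx \in \cX}\|\vx\|$ and $\bar{\cC}(\overline{\cF}^M, \cX) = (1.5\sqrt{q}+1)M\sup_{\vx \in \cX}\|\vx\|$ from the \citet{Golowich2018SizeIndependentSample} bound in Eq.\,(\ref{eq:f_radem}). Your arithmetic $4(1.5\sqrt{q}+1)+4 = 6\sqrt{q}+8$ and the factoring of the common prefactor reproduce the stated bound exactly.
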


\begin{lemma}\label{lem:degree_bound_relu}
    Suppose $n_k > 2$. For $\overline{\vf} \in \overline{\cF}^M$, assume that for all $\vx \in \cS_k$, $\overline{\vf}(\vx) = \overline{\vmu}(\cS_k)$ holds. For any $0 < \delta, \delta' < 1$, define $\theta$ as follows:
    \begin{align}
        \theta \equiv \frac\pi{2\delta'}\frac{r\|\mW^1\|_2 M\sup_{\vx\in \cX}\|\vx\|}{\sqrt{n_k}\|\overline{\vmu}(\cS_k)\|}\ab(6\sqrt{q} + 8 + \sqrt{2\log \frac r{\delta}}).
    \end{align}
    Assume that $\delta, \delta'$ satisfy $\theta \leq \frac{\pi}{2}$. For $\vx \sim P_k$, the probability that the angle between $\overline{\vf}(\vx)$ and $\overline{\vmu}(\cS_k)$ is less than $\theta$ is at least $1-(\delta + \delta')$.
\end{lemma}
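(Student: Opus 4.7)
The plan is to mirror the proof of Lemma \ref{lem:degree_bound} step-for-step, making only the two substitutions that specialize the generic setting to ReLU feature maps. Concretely, I would replace $\bar{\cC}(\cF, \cX)$ with its ReLU value $(1.5\sqrt{q}+1)M\sup_{\vx\in\cX}\|\vx\|$ as obtained in Eq.~(\ref{eq:f_radem}) from the Golowich et al.\ bound, and replace $B$ with $M\sup_{\vx\in\cX}\|\vx\|$, which is justified because ReLU is $1$-Lipschitz in $\ell_2$ and $\|\mW^s\|_2\le\|\mW^s\|_F$ gives $\|\overline{\vh}(\vx)\|\le \bigl(\prod_{s=2}^q\|\mW^s\|_F\bigr)\|\vx\|\le M\sup_{\vx\in\cX}\|\vx\|$. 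With these replacements, Lemma \ref{lemma:feature_bound1} becomes exactly Lemma \ref{lemma:feature_bound1_relu}, and the bracketed constant in the definition of $\theta$ reduces to $4(1.5\sqrt{q}+1)+4 = 6\sqrt{q}+8$, matching the statement.

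With that in hand, the proof proceeds in three short steps identical in structure to the proof of Lemma \ref{lem:degree_bound}. First, by the triangle inequality together with the i.i.d.\ coupling of $\vx$ to the samples in $\cS_k$,
\begin{align}
    \E_{\vx\sim P_k,\,\cS_k\sim P_k^{n_k}}\ab\|\overline{\vf}(\vx)-\overline{\vmu}(\cS_k)\|
    \le \E_{\vx,\vx'\sim P_k}\ab\|\overline{\vf}(\vx)-\overline{\vf}(\vx')\|,
\end{align}
and Markov's inequality yields, with probability at least $1-\delta'$, the sample-wise bound $\|\overline{\vf}(\vx)-\overline{\vmu}(\cS_k)\|\le (1/\delta')\,\E\|\overline{\vf}(\vx)-\overline{\vf}(\vx')\|$. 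Second, I invoke Lemma \ref{lemma:feature_bound1_relu} with failure probability $\delta$; under the assumption $\overline{\vf}(\vx)=\overline{\vmu}(\cS_k)$ for every $\vx\in\cS_k$, the averaged term $\Avg_{\vx,\vx'\in\cS_k}|\vv_l^\top(\overline{\vh}(\vx)-\overline{\vh}(\vx'))|$ vanishes by exactly the argument behind Eq.~(\ref{eq:int_feature_zero}): if $\mW^1\overline{\vh}$ is constant on $\cS_k$, the linear independence of the left singular vectors $\{\vu_l\}$ forces each coordinate $\vv_l^\top\overline{\vh}$ to be constant on $\cS_k$ as well. A union bound then gives, with probability at least $1-(\delta+\delta')$,
\begin{align}
    \ab\|\overline{\vf}(\vx)-\overline{\vmu}(\cS_k)\|
    < \frac{1}{\delta'}\cdot\frac{r\|\mW^1\|_2\,M\sup_{\vx\in\cX}\|\vx\|}{\sqrt{n_k}}\ab(6\sqrt{q}+8+\sqrt{2\log\tfrac{r}{\delta}})
    \;=\;\frac{2}{\pi}\ab\|\overline{\vmu}(\cS_k)\|\,\theta,
\end{align}
where the last equality is just unpacking the definition of $\theta$ in the statement.

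Third, I convert this norm bound to an angular bound. Because $\theta\le\pi/2$, we have $\|\overline{\vf}(\vx)-\overline{\vmu}(\cS_k)\|/\|\overline{\vmu}(\cS_k)\|<\tfrac{2}{\pi}\theta\le 1$, so the angle between $\overline{\vf}(\vx)$ and $\overline{\vmu}(\cS_k)$ is bounded by $\arcsin\bigl(\|\overline{\vf}(\vx)-\overline{\vmu}(\cS_k)\|/\|\overline{\vmu}(\cS_k)\|\bigr)<\arcsin(\tfrac{2}{\pi}\theta)\le\theta$, using $\arcsin(x)\le\tfrac{\pi}{2}x$ on $[0,1]$ (Jordan's inequality). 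This is precisely the claim.

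There is essentially no obstacle: the work was already done in Lemmas \ref{lemma:feature_bound1} and \ref{lem:degree_bound}, and the ReLU version inherits everything via the two constant substitutions. The only items worth double-checking are the arithmetic $4(1.5\sqrt{q}+1)+4=6\sqrt{q}+8$ and the justification that the Golowich et al.\ bound controls $\hat{\mathfrak{R}}_{\cS_k}(\overline{\cH}_l^M)$ for the specific class $\overline{\cH}_l^M$ appearing here, both of which are routine.
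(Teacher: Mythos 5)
Your proposal is correct and follows exactly the route the paper intends: the paper omits this proof precisely because it is Lemma \ref{lem:degree_bound}'s argument with the substitutions $\bar{\cC}(\overline{\cF},\cX)=(1.5\sqrt{q}+1)M\sup_{\vx\in\cX}\|\vx\|$ and $B=M\sup_{\vx\in\cX}\|\vx\|$, and your three steps (Markov bound on $\E\|\overline{\vf}(\vx)-\overline{\vmu}(\cS_k)\|$, vanishing of the empirical average via the singular-vector argument of Eq.~(\ref{eq:int_feature_zero}) plus Lemma \ref{lemma:feature_bound1_relu}, then the $\arcsin$ conversion to an angle) reproduce that argument faithfully, with the arithmetic $4(1.5\sqrt{q}+1)+4=6\sqrt{q}+8$ checking out.
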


We define the angular bound probability $\overline{\Pi}(\theta; k)$ for a ReLU neural network as in Section \ref{sec:main_1v1}. Then, Proposition \ref{prop:feature_prob} can be rewritten as follows. The proof is omitted here as well.

\begin{lemma}\label{lem:feature_prob_relu}
   Suppose $n_k > 2$. For $\overline{\vf} \in \overline{\cF}^M$, assume that for all $\vx \in \cS_k$, $\overline{\vf}(\vx) = \overline{\vmu}(\cS_k)$ holds. 
    Consider any $\theta$ satisfying the following conditions:
    \begin{align}
        \label{eq:prop1_theta_relu}
        \frac{\pi  r\|\mW^1\|_2M\sup_{\vx\in \cX}\|\vx\|}{2\sqrt{n_k}\|\overline{\vmu}(\cS_k)\|}\ab(6\sqrt{q}+8 + \sqrt{2\log\ab(\frac{\sqrt{n_k}\|\overline{\vmu}(\cS_k)\|}{\|\mW^1\|_2})}) < \theta < \frac\pi2.
    \end{align}
    For such $\theta$, the following holds:
    \begin{align}
        \overline{\Pi}(\theta;k) = 1-\frac{\pi r\|\mW^1\|_2 }{2\theta\sqrt{n_k}\|\overline{\vmu}(\cS_k)\|}\ab( 1 + M\sup_{\vx\in \cX}\|\vx\|\ab(6\sqrt{q} + 8  + \sqrt{2\log\ab(\frac{2\theta\sqrt{n_k}\|\overline{\vmu}(\cS_k)\|}{\pi \|\mW^1\|_2})})).
    \end{align}
\end{lemma}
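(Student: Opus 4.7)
The plan is to deduce Lemma \ref{lem:feature_prob_relu} as a direct ReLU specialization of the argument that proves Proposition \ref{prop:feature_prob}, invoking the ReLU-specific bound Lemma \ref{lem:degree_bound_relu} in place of Lemma \ref{lem:degree_bound}. Concretely, I would choose the two confidence parameters $\delta = \frac{\pi r\|\mW^1\|_2}{2\theta\sqrt{n_k}\|\overline{\vmu}(\cS_k)\|}$ and $\delta' = \delta \cdot M\sup_{\vx\in\cX}\|\vx\|\left(6\sqrt{q}+8 + \sqrt{2\log\frac{2\theta\sqrt{n_k}\|\overline{\vmu}(\cS_k)\|}{\pi\|\mW^1\|_2}}\right)$, and check by direct substitution that $r/\delta = 2\theta\sqrt{n_k}\|\overline{\vmu}(\cS_k)\|/(\pi\|\mW^1\|_2)$, so that the $\sqrt{2\log(r/\delta)}$ appearing in the definition of the threshold inside Lemma \ref{lem:degree_bound_relu} agrees with the logarithmic term in $\delta'$. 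The definition of $\delta'$ is then precisely what is needed to recover our $\theta$ from the formula for the threshold in that lemma.

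Next, I would verify the admissibility conditions $0 < \delta, \delta' < 1$ using the lower bound on $\theta$ in Eq.\,(\ref{eq:prop1_theta_relu}). Since $\theta < \pi/2$, the logarithmic argument inside $\delta'$ is strictly smaller than the corresponding argument in the condition, so dividing both sides of Eq.\,(\ref{eq:prop1_theta_relu}) by $\theta$ yields $\delta' < 1$; the inequality $\delta \le \delta'$ follows from the paper's standing assumption $B \ge 1$ in Section \ref{sec:preliminaries} (which specializes here to $B = M\sup_{\vx\in\cX}\|\vx\| \ge 1$) together with $6\sqrt{q}+8 \ge 8$. The upper bound $\theta < \pi/2$ is also what allows the $\arcsin$ step inside Lemma \ref{lem:degree_bound_relu} to convert the norm concentration into an angular concentration.

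Invoking Lemma \ref{lem:degree_bound_relu} then yields $\overline{\Pi}(\theta;k) \ge 1-(\delta+\delta')$. Summing $\delta$ and $\delta'$, the ``$1$'' standing outside the $M\sup_{\vx}\|\vx\|$ factor in the claimed expression comes from $\delta$ itself, while the $(6\sqrt{q}+8)$ coefficient arises by collecting the ReLU constants $4\bar{\cC}(\overline{\cF},\cX)+4B = 4(1.5\sqrt{q}+1)M\sup_{\vx}\|\vx\| + 4M\sup_{\vx}\|\vx\| = (6\sqrt{q}+8)M\sup_{\vx}\|\vx\|$, which replace the general-case constants in Proposition \ref{prop:feature_prob}. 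The resulting expression matches the stated $\overline{\Pi}(\theta;k)$ verbatim.

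The main obstacle is purely clerical rather than conceptual: keeping track of which of the ReLU substitutions $\bar{\cC}(\overline{\cF},\cX) = (1.5\sqrt{q}+1)M\sup_{\vx}\|\vx\|$ and $B = M\sup_{\vx}\|\vx\|$ (both derived from the Golowich et al.\ Rademacher bound in Eq.\,(\ref{eq:f_radem})) are absorbed into the $M\sup_{\vx}\|\vx\|$ factor and, in particular, ensuring that the ``$+1$'' contributed by $\delta$ is isolated \emph{outside} that factor. All other steps — the McDiarmid/Rademacher bound, Markov's inequality, the union bound, and the final $\arcsin$ conversion — are inherited from the general proof without modification.
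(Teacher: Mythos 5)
Your proposal is correct and matches the paper's (omitted) argument: the paper treats Lemma \ref{lem:feature_prob_relu} as a direct substitution of $\bar{\cC}(\overline{\cF},\cX) = (1.5\sqrt{q}+1)M\sup_{\vx\in\cX}\|\vx\|$ and $B = M\sup_{\vx\in\cX}\|\vx\|$ into the proof of Proposition \ref{prop:feature_prob} via Lemma \ref{lem:degree_bound_relu}, which is exactly what you carry out, including the same choices of $\delta$ and $\delta'$ and the same admissibility checks. No gaps to report.
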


Propositions \ref{prop:max_deg_2} and \ref{prop:max_deg_multi} also hold similarly, ensuring the validity of \ours.

\subsection{MLA Approximates \ours}\label{app:proof_mla}
In this section, we provide the proof of Lemma \ref{lem:tan_appro_b}.
\LemmaTanAppro*

\begin{proof}
When $\theta = 0$, the result is trivial since $\phi(0) = \tan(0) = 0$. For the remainder, we assume $\theta \in (0,1)$ unless otherwise stated. 
Define $h(\theta)\equiv \frac{\tan\ab(\frac\pi2 \theta)}{\phi(\theta)} = \frac{\tan\ab(\frac\pi2 \theta)(1-\theta)}{\theta}$.
Since $h(\theta) = \frac{1}{h(1-\theta)}$, proving the following will conclude the proof:
\begin{align}
    1 \le h(\theta) < \frac\pi2 \qquad  \ab(0 < \theta \le \frac12 ) \label{eq:mla_approx_obj}.
\end{align}

First, we demonstrate that $h(\theta)$ is monotonically non-increasing.
\begin{align}
    h'(\theta) &= \frac{\theta\ab(\frac\pi2 \frac1{\cos^2\ab(\frac\pi2 \theta)}(1-\theta) - \tan\ab(\frac\pi2 \theta)) - \tan\ab(\frac\pi2 \theta)(1-\theta)}{\theta^2} \nonumber \\
    &= \frac{\frac\pi2 \frac{\theta(1-\theta)}{\cos^2\ab(\frac\pi2 \theta)} - \tan\ab(\frac\pi2 \theta)}{\theta^2} \label{eq:mla_approx_h}.
\end{align}

Now, define $l(\theta) \equiv \frac\pi2 \frac{\theta(1-\theta)}{\cos^2\ab(\frac\pi2 \theta)} - \tan\ab(\frac\pi2 \theta)$ for $\theta \in [0, 1)$:
\begin{align}
    l'(\theta) &= \frac\pi2 \ab(\frac{(-2\theta+1)\cos^2\ab(\frac\pi2 \theta) - \pi\theta(1-\theta)\sin(\frac\pi2 \theta)\cos(\frac\pi2 \theta)}{\cos^4\ab(\frac\pi2 \theta)} - \frac1{\cos^2\ab(\frac\pi2 \theta)}) \nonumber \\
    &= -\frac\pi2 \frac{2\theta\cos^2\ab(\frac\pi2 \theta) + \pi\theta(1-\theta)\sin(\frac\pi2 \theta)\cos(\frac\pi2 \theta)}{\cos^4\ab(\frac\pi2 \theta)} \nonumber \\
    &\le 0.

\end{align}
Since $l(0) = 0$, we have $l(\theta) \leq 0$ for $\theta \in [0, 1)$. Therefore, from Eq.\,(\ref{eq:mla_approx_h}), $ h'(\theta) \leq 0$ for $\theta \in (0,1)$, indicating that $h(\theta)$ is monotonically non-increasing.

Since $h(\theta) = 1$, to prove Eq.\,(\ref{eq:mla_approx_obj}), we must indicate that $\lim_{\theta\to +0} h(\theta) = \frac\pi2$. 
By applying l'Hôpital's rule, this can be verified.
\begin{align}
    \lim_{\theta\to +0} h(\theta) &= \lim_{\theta\to +0} \frac{\tan\ab(\frac\pi2 \theta)(1-\theta)}{\theta}\nonumber \\
            &= \lim_{\theta\to +0} -\tan\ab(\frac\pi2 \theta) +\frac\pi2 \frac{(1-\theta)}{\cos^2\ab(\frac\pi2 \theta)} \nonumber \\
            &= \frac\pi2.
\end{align}

Thus, Eq.\,(\ref{eq:mla_approx_obj}) holds, completing the proof.

\end{proof}

\section{Algorithm of \ours}\label{app:alg}
Algorithm \ref{algo:onevone} outlines the detailed steps of \ours, a 1-vs-1 multi-class classifier that performs classification based on the decision boundaries proposed in Proposition \ref{prop:max_deg_multi}. 
The derivation of the normal vector $\vm_{k, k'} \in \R^{d}$ for the decision boundary between $k$ and $k'$ is as follows. Since $\vm_{k, k'}$ lies in the same plane as $\vw_k$ and $\vw_{k'}$, and the angles with each are specified, it can be expressed using $\alpha, \beta \in \R$ as follows. Note that we assume $\|\vm_{k, k'}\| = 1$ here.
\begin{equation}
\left\{ \,
    \begin{aligned}
    \vm_{k, k'} &= \alpha \vw_k + \beta \vw_{k'}\\
    \vm_{k, k'}^\top \vw_k &= \cos\ab(\frac\pi2 - \thetaA^*(\gamo)) \\
    \vm_{k, k'}^\top \vw_{k'} &= \cos\ab(\frac\pi2 + \thetaB^*(\gamo))
    \end{aligned}
\right.
\end{equation}
Solving this system of equations for $\alpha$ and $\beta$, we obtain the following:
\begin{equation}
\left\{ \,
    \begin{aligned}
    \alpha &= \frac{\sin \thetaA^*(\gamo) + \sin \thetaB^*(\gamo) \cos \psi}{1 - \cos^2\psi} \\
    \beta &= - \frac{\sin \thetaB^*(\gamo) + \sin \thetaA^*(\gamo) \cos \psi}{1 - \cos^2\psi}
    \end{aligned}
\right.
\end{equation}
Thus, the decision boundary can be defined as a plane passing through the origin with $\vm_{k, k'}$ as the normal vector. Note that Algorithm \ref{algo:onevone} simplifies this by using the normal vector multiplied by $1 - \cos^2\psi$. Additionally, when $K$ is sufficiently large, $\vm_{k, k'}$ can be approximated as $\vm_{k, k'} \sim \sin\thetaA^*(\gamo) \vw_k - \sin \thetaB^*(\gamo) \vw_{k'}$.

\begin{algorithm}[tb]
    \caption{\ours}
    \label{algo:onevone}
    \begin{algorithmic}[1]

    \Function {One\_VS\_ONE}{$\vf(\vx), \mW, \Theta^*_{\gamo}$}
        \State \textbf{let} $counter[1..K] $ be a zero-initialized array
        \ForAll {$k \gets \cY$}
            \ForAll {$k' \gets \cY$}
                \If {$k = k'$}
                    \State \textbf{continue}
                \EndIf
                \State \textbf{let} $\alpha = \sin \thetaA^*(\gamo) + \sin \thetaB^*(\gamo) \cos \psi$
                \State \textbf{let} $\beta = \sin \thetaB^*(\gamo) + \sin \thetaA^*(\gamo) \cos \psi$
                \State \textbf{let} $\vm_{k, k'} = \alpha \vw_k + \beta \vw_{k'}$
                \If {$\vm_{k, k'}^\top\vf(\vx) > 0$}
                    \State $counter[k] \gets counter[k] + 1$
                \EndIf
            \EndFor
        \EndFor
        \State \Return $\argmax(counter)$
    \EndFunction

    \end{algorithmic}
\end{algorithm}

\section{Discussion: Approximation of ALA}\label{app:proof_ala}

Similar to the approximation of MLA in Section \ref{sec:main_approximate_mla}, we explore approximating ALA to \ours. ALA modifies the logits by adding a correction term, $-\gama\log n_k$, where $\gama > 0$. 

First, we assume that for any $\vx \in \cX$, the feature norm is constant, i.e., $\vf(\vx) = |\vf|$. Similar to the MLA case, we denote the coefficient of the ALA by $\eta_k$. To adjust the decision boundaries in ALA such that it satisfies Proposition \ref{prop:max_deg_multi}, $\eta_k$ must be set so that the following condition holds for all $k \neq k' \in \cY$:
\begin{align}
    &\quad \|\vf\|\cos(\thetaA^*(\gamo)) - \eta_k = \|\vf\|\cos(\thetaB^*(\gamo)) - \eta_{k'} \nonumber \\ 
    &\Leftrightarrow \|\vf\|\ab(\cos(\thetaA^*(\gamo)) - \cos(\thetaB^*(\gamo))) = \eta_k - \eta_{k'} \label{eq:ala_eq1}.
\end{align}
For the left-hand side:
\begin{align}
    &\cos(\thetaA^*(\gamo)) - \cos(\thetaB^*(\gamo)) \nonumber \\
    &= -2\sin\ab(\frac{\thetaA^*(\gamo) + \thetaB^*(\gamo)}2)\sin\ab(\frac{\thetaA^*(\gamo) - \thetaB^*(\gamo)}2) \nonumber \\
    &= -2\sin\ab(\frac\psi2)\sin\ab(\frac\psi2-\thetaB^*(\gamo)) \nonumber \\
    &= -2\sin\ab(\frac\psi2)\sin\ab(\thetaA^*(\gamo) - \frac\psi2) \nonumber \\
    &= 2\sin\ab(\frac\psi2)\sin\ab(\frac\psi2-\thetaA^*(\gamo)) \label{eq:ala_eq2} \\
    &= 2\sin\ab(\frac\psi2)\sin\ab(\frac\psi2\ab(1-\frac2{\tau_{k, k'}(\gamo) + 1})) \nonumber .
\end{align}

Let us denote the right-hand side as $g(\tau_{k, k'}(\gamo))$. We set $\gama^* > 0$ so that it can be approximated near $\tau_{k, k'}(\gamo) = 1$ by $2\gama^*\log(\tau_{k, k'}(\gamo))$. Specifically, we set $\gama^*$ to satisfy:
\begin{align}
    g(1) &= 0 = -2\gama^*\log(1),\\
    g'(1) &= \frac\psi2\sin\ab(\frac\psi2) = 2\gama^*.
\end{align}
The solution is $\gama^* = \frac\psi4\sin(\frac\psi2)$. Thus, when $\tau_{k, k'}(\gamo) \sim 1$:
\begin{align}
    \eta_k - \eta_{k'} &\sim 2\gama^* \log \tau_{k, k'}(\gamo)\nonumber \\
     &= \gama^*\ab(\log (\|\vmu(\cS_k)\|^2n_k^{\gamo}) - \log(\|\vmu(\cS_{k'})\|^2n_{k'}^{\gamo})) \nonumber \\
     &= \gama^*\ab(\gamo\log n_k - \gamo\log n_{k'} + 2(\log\|\vmu(\cS_k)\| - \log\|\vmu(\cS_{k'})\|)) \nonumber \\
     &= \gama^*\gamo\ab(\log n_k - \log n_{k'} ).
\end{align}

From this, we conclude that when $\tau_{k, k'}(\gamo) \sim 1$ and $\|\vmu(\cS_k)\| = \|\vmu(\cS_{k'})\|$ for all $k, k' \in \cY$, the ALA results in the decision boundaries as \ours. However, under LTR settings, $\tau_{k, k'}(\gamo)$ can vary significantly depending on $k$ and $k'$, making this approximation less realistic in practice.

\section{Experiments}\label{app:experiments}

In this section, we summarize the details of the experimental settings and results that are not fully covered in the main text. Appendix \ref{app:experiments_setting} outlines the detailed experimental settings. Appendix \ref{app:experiments_theta} provides supplementary results for the experiments in Section \ref{sec:exp_theta}, while Appendix \ref{app:experiments_acc} presents additional results for the experiments in Section \ref{sec:exp_acc} under different conditions.

\subsection{Settings}\label{app:experiments_setting}
We describe the detailed training settings in this section. We mainly followed the hyperparameters used in \citet{Hasegawa2023ExploringWeight}. First, we provide the details for experiments on image datasets, and then describe the experiments on tabular data, highlighting the differences.

\subsubsection{Datasets}
Following \citet{Cui2019ClassbalancedLoss} and \citet{Liu2019LargeScaleLongTailed}, we created long-tailed versions of image datasets. 
For tuning the hyperparameters $\gamo, \gama, \gamm$, we used validation datasets. Since CIFAR10 and CIFAR100 do not have validation datasets, we created validation datasets using a portion of the training datasets. Following \citet{Liu2019LargeScaleLongTailed}, we constructed the validation datasets by extracting only $20$ samples per class from the training datasets and using the remaining samples as the training datasets. For CIFAR100, we set $n_1$ to $480$, and for CIFAR10, we set it to $4980$. The imbalance factor $\rho$ was set to $100$. The classes within each dataset were divided into three groups—\textit{Many}, \textit{Medium}, and \textit{Few}—on the basis of the number of training samples $n_k$. For CIFAR100-LT and ImageNet-LT, we categorized classes $k$ as \textit{Many} if they had more than $1000$ training samples ($n_k > 1000$), \textit{Medium} if they had between $200$ and $1000$ training samples ($200 \leq n_k \leq 1000$), and \textit{Few} otherwise. For CIFAR10-LT, we classified classes as \textit{Many} if they had more than $100$ samples, \textit{Medium} if they had between $20$ and $100$ samples, and \textit{Few} otherwise.

\subsubsection{Evaluation Metrics}
Unless otherwise specified, we used the following hyperparameters with ResNet. We chose stochastic gradient descent with $\text{momentum}=0.9$ as the optimizer and applied a cosine learning rate scheduler \citep{Loshchilov2017SGDRStochastic} to gradually decrease the learning rate from $0.01$ to $0$. The batch size was set to $64$, and the number of training epochs was $320$. The loss function used was cross-entropy loss, and regularization included a weight decay of $0.005$ \citep{Hanson1989ComparingBiases} and feature regularization of $0.01$ \citep{Hasegawa2023ExploringWeight}. Although we used an ETF classifier for the linear layer, we did not use Dot-Regression Loss \citep{Yang2022InducingNeural}, following \citet{Hasegawa2023ExploringWeight}. The optimal $\gamma \in \{\gamo, \gama, \gamm\}$ for LA and \ours\ were determined using cross-validation on validation datasets, exploring values from $\{0.00, 0.05, \ldots, 2.00\}$.

Next, we describe the experimental setup for ImageNet-LT. The learning rate was gradually decreased from $0.05$ to $0$, with the number of training epochs set to $200$. Regularization involved a weight decay of $0.00024$ and a feature regularization of $0.00003$. All other settings were the same as those used for CIFAR100-LT.

For the experiments in Section \ref{sec:exp_theta}, we used the optimal values of $\gamo, \gama, \gamm$ determined by the validation data. For ALA, the value of $\|\vf\|$ was calculated by averaging the norm of the class mean features for each class.

The accuracy reported in Section \ref{sec:exp_acc} represents the mean and standard deviation across five independent experiments, each using different random seeds. The values of $\gamo$, $\gamm$, and the training accuracy mentioned in Section \ref{sec:exp_hypara} correspond to these experiments. All experiments were conducted on a single NVIDIA A100.

\subsubsection{Tabular Data}
Next, we describe the experiments conducted on tabular data. Since tabular data have characteristics that differ significantly from image data, we conducted these experiments to demonstrate that our results can generalize to other modalities beyond images. Apart from the settings described below, the procedure was identical to that used for image data. Following \citet{Hasegawa2023ExploringWeight}, we used the Helena dataset with 100 classes. Since this dataset is not pre-split into validation and test sets, we randomly sampled 20 non-overlapping samples per class for validation and test sets respectively. The distribution of this dataset resembles a long-tailed distribution with $\rho \simeq 40$. For each class, those with more than 500 training samples were categorized as \textit{Many}, those with $200 \le n_k \le 500$ as \textit{Medium}, and the rest as \textit{Few}.

In line with \citet{Kadra2021WelltunedSimple}, we trained a MLP with sufficient regularization. We used a 9-layer MLP with 512-dimensional hidden layers as the feature map. The model was trained for 400 epochs using AdamW \citep{Loshchilov2018DecoupledWeight}. Regularization methods included a $0.15$ dropout rate \citep{Srivastava2014DropoutSimple}, $0.15$ weight decay, and $0.001$ feature regularization.

\subsection{Decision Boundary Angles}\label{app:experiments_theta}
We present the results of the experiments from Section \ref{sec:exp_theta} conducted with different models and datasets. Figure \ref{fig:theta_resnext} illustrates the differences in the angles of the decision boundaries of ResNeXt50 trained on CIFAR100-LT, while Figures \ref{fig:theta_imagenet} and \ref{fig:theta_helena} display the results for ImageNet-LT and Helena, respectively. In all cases, the decision boundaries adjusted by MLA and \ours\ tend to be more similar to each other than to those adjusted by ALA.

\begin{figure}
    \begin{center}
    \includegraphics[width=1.0\linewidth]{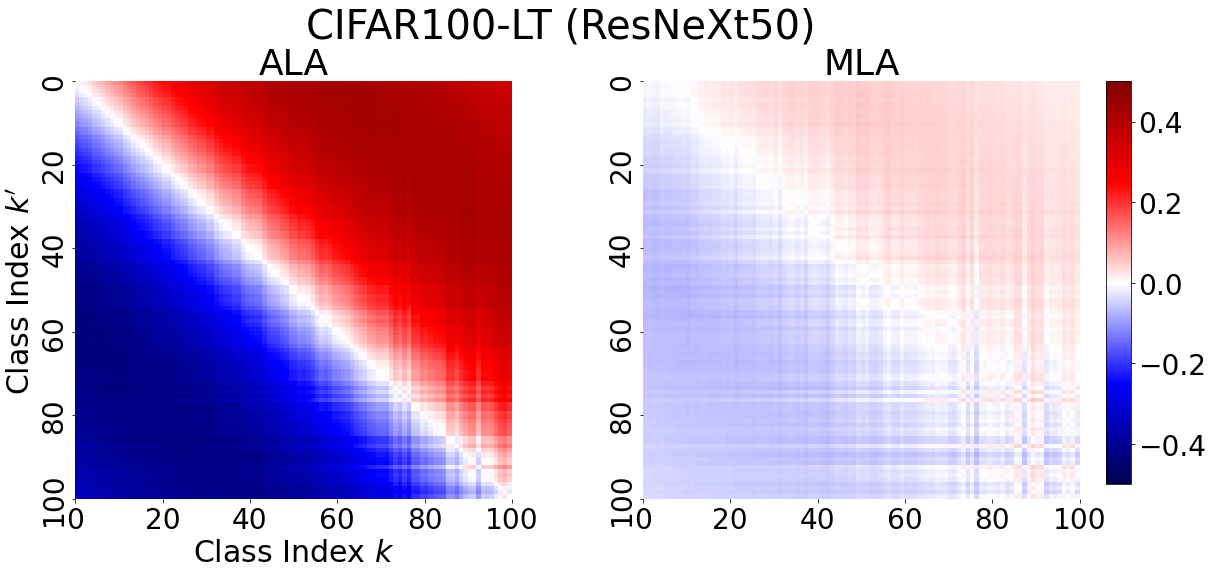}
    \caption{Heatmaps showing the difference in angles between the decision boundaries adjusted by each method and those adjusted by \ours\ of ResNeXt50 trained on CIFAR100-LT. The angle differences
between MLA and 1vs1adjuster are generally small compared to the difference between ALA and 1vs1adjuster. }
    \label{fig:theta_resnext}
    \end{center}
\end{figure}

\begin{figure}
    \begin{center}
    \includegraphics[width=1.0\linewidth]{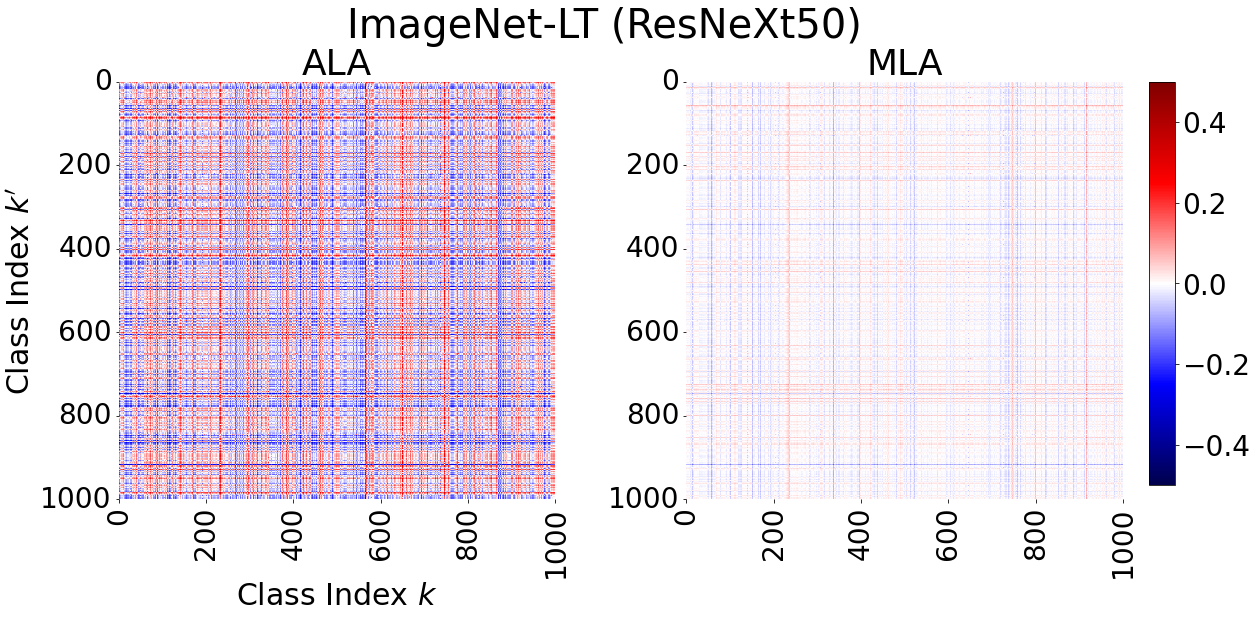}
    \caption{Heatmaps showing the difference in angles between the decision boundaries adjusted by each method and those adjusted by \ours\ of ResNeXt50 trained on ImageNet-LT. The angle differences
between MLA and 1vs1adjuster are generally small compared to the difference between ALA and 1vs1adjuster. }
    \label{fig:theta_imagenet}
    \end{center}
\end{figure}

\begin{figure}
    \begin{center}
    \includegraphics[width=1.0\linewidth]{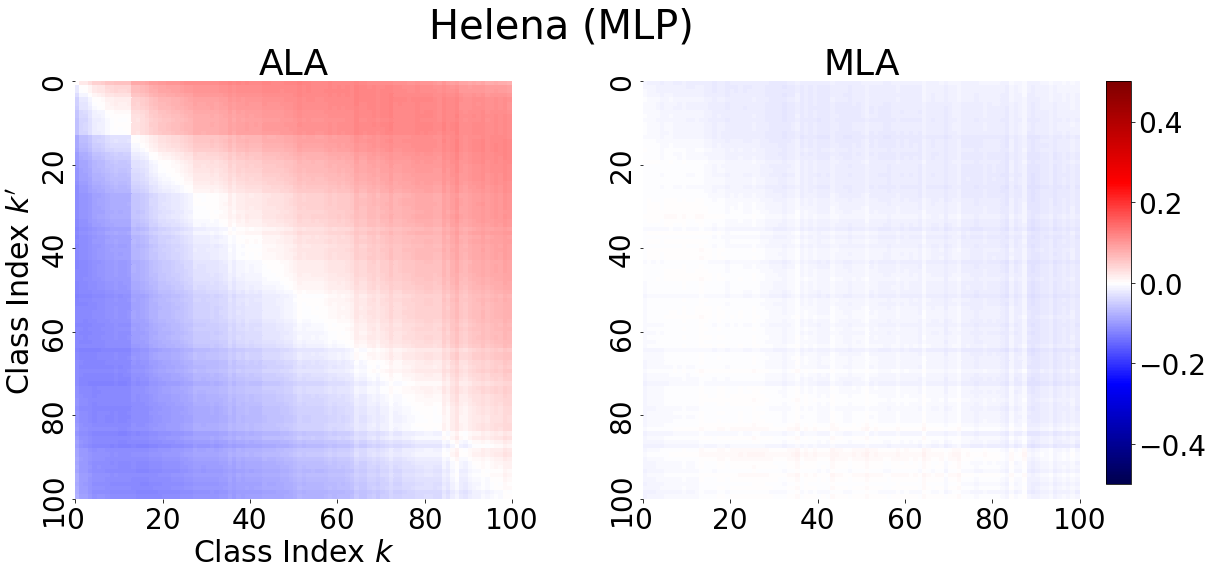}
    \caption{Heatmaps showing the difference in angles between the decision boundaries adjusted by each method and those adjusted by \ours\ of MLP trained on Helena. The angle differences
between MLA and 1vs1adjuster are generally small compared to the difference between ALA and 1vs1adjuster. }
    \label{fig:theta_helena}
    \end{center}
\end{figure}

\subsection{Test Accuracy}\label{app:experiments_acc}
\begin{figure}
    \begin{center}
    \includegraphics[width=0.5\linewidth]{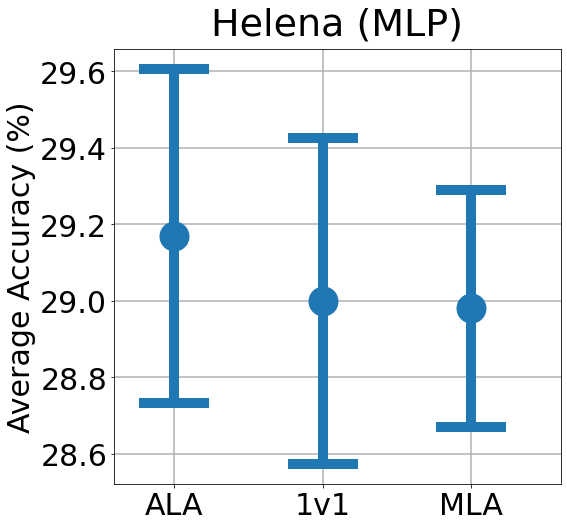}
    \caption{Average accuracy of MLP trained on Helena and adjusted by different methods.  MLA and \ours\ achieve comparable accuracy as with the experiments on other datasets in Figure \ref{fig:acc_errbar_cifar}. }
    \label{fig:acc_errbar_helena}
    \end{center}
\end{figure}
In this subsection, we first present the error bar plot for the average test accuracy on Helena, which is not shown in Section \ref{sec:exp_acc}, followed by a detailed table of test accuracy of models adjusted by each method.

\paragraph{Error Bar} 
Figure \ref{fig:acc_errbar_helena} shows the average accuracy of models trained on Helena, adjusted by different methods. Notably, models trained on Helena, similar to those on ImageNet-LT, did not achieve a training accuracy of $100\%$, indicating that NC has not fully occurred (see Table \ref{tab:hypara}). Despite this, the results show that MLA and \ours\ achieve comparable accuracy, consistent with the experiments on other image datasets.

\paragraph{Table}
We compare the test accuracy when each method is applied post-hoc to trained models. We refer to the models only trained on the training data, without the application of any post-hoc adjustment methods, as Baseline.
We present two cases for each adjustment method: when the hyperparameter values are set according to the values derived from each theory (0.5 or 1.0), and when the parameters are tuned using validation data (best). In addition to the overall average accuracy (Average), we also report the average accuracy for the \textit{Many}, \textit{Medium}, and \textit{Few} categories. Refer to Appendix \ref{app:experiments_setting} for details on these categorizations. The results for CIFAR100-LT and CIFAR10-LT are presented in Tables \ref{tab:acc_cifar100_resnet34} and \ref{tab:acc_cifar10}, respectively. Table \ref{tab:acc_resnext} reports the accuracy of ResNeXt50 on CIFAR100-LT, while Tables \ref{tab:acc_imagenet} and \ref{tab:helena} display the results of ImageNet-LT and Helena, respectively.

In all results, the Baseline shows higher accuracy for the \textit{Many} category, as naive training on long-tailed data results in outputs biased toward head classes. The \ours\ and post-hoc LA methods successfully adjust for this bias by increasing the accuracy of \textit{Few} classes, thereby improving the overall average accuracy.
As shown in Section \ref{sec:exp_acc}, MLA achieves nearly the same average accuracy as the \ours\ across all datasets. Additionally, MLA outperforms ALA in terms of average accuracy for CIFAR100-LT.

\begin{table}
    \centering
    \caption{Accuracy of ResNet34 adjusted with each method on CIFAR100-LT. MLA and \ours\ achieve comparable average accuracy. MLA outperforms ALA in average accuracy on CIFAR100-LT.}
    \label{tab:acc_cifar100_resnet34}
    \begin{tabular}{lcccc}
    \toprule
    Method      & \textit{Many}  & \textit{Medium}     & \textit{Few} & Average  \\  \cmidrule(r){1-1} \cmidrule(l){2-5}  
Baseline & $\mathbf{77.9_{\pm0.3}}$ & $46.8_{\pm1.0}$ & $15.3_{\pm0.3}$ & $47.6_{\pm0.5}$  \\
ALA ($\gama = 1.0$) & $72.3_{\pm0.2}$ & $47.9_{\pm1.1}$ & $28.3_{\pm0.9}$ & $50.1_{\pm0.4}$  \\
ALA (best) & $75.6_{\pm0.9}$ & $49.2_{\pm1.0}$ & $25.2_{\pm2.2}$ & $50.7_{\pm0.4}$  \\
1v1 ($\gamo = 0.5$) & $76.4_{\pm0.3}$ & $\mathbf{51.7_{\pm0.9}}$ & $24.7_{\pm0.7}$ & $51.7_{\pm0.2}$  \\
1v1 (best) & $74.2_{\pm1.0}$ & $\mathbf{53.0_{\pm0.9}}$ & $\mathbf{29.6_{\pm1.1}}$ & $\mathbf{52.9_{\pm0.3}}$  \\
MLA ($\gamm = 0.5$) & $75.6_{\pm0.3}$ & $\mathbf{52.3_{\pm0.8}}$ & $27.1_{\pm0.7}$ & $52.4_{\pm0.2}$  \\
MLA (best) & $73.2_{\pm0.8}$ & $\mathbf{52.8_{\pm0.9}}$ & $\mathbf{30.9_{\pm1.1}}$ & $\mathbf{53.0_{\pm0.3}}$  \\
    \bottomrule
    \end{tabular}
\end{table}

\begin{table}
    \centering
    \caption{Accuracy of ResNet34 adjusted with each method on CIFAR10-LT. Although the number of classes is insufficient and the assumptions of our theory do not hold completely, MLA and \ours\ achieve comparable average accuracy. }
    \label{tab:acc_cifar10}
    \begin{tabular}{lcccc}
    \toprule
    Method      & \textit{Many}  & \textit{Medium}     & \textit{Few} & Average  \\  \cmidrule(r){1-1} \cmidrule(l){2-5}
Baseline & $\mathbf{89.2_{\pm1.5}}$ & $\mathbf{76.3_{\pm2.5}}$ & $60.8_{\pm4.6}$ & $76.8_{\pm0.8}$  \\
ALA ($\gama = 1.0$) & $\mathbf{87.2_{\pm2.5}}$ & $\mathbf{78.5_{\pm2.9}}$ & $\mathbf{71.7_{\pm3.9}}$ & $\mathbf{79.9_{\pm1.0}}$  \\
ALA (best) & $\mathbf{87.7_{\pm1.2}}$ & $\mathbf{78.2_{\pm3.0}}$ & $\mathbf{70.5_{\pm2.7}}$ & $\mathbf{79.7_{\pm1.1}}$  \\
1v1 ($\gamo = 0.5$) & $\mathbf{88.0_{\pm2.2}}$ & $\mathbf{78.6_{\pm2.7}}$ & $\mathbf{68.9_{\pm4.2}}$ & $\mathbf{79.5_{\pm0.8}}$  \\
1v1 (best) & $\mathbf{87.9_{\pm1.8}}$ & $\mathbf{79.3_{\pm4.0}}$ & $\mathbf{69.0_{\pm2.7}}$ & $\mathbf{79.6_{\pm1.1}}$  \\
MLA ($\gamm = 0.5$) & $\mathbf{88.1_{\pm2.2}}$ & $\mathbf{78.7_{\pm2.7}}$ & $\mathbf{68.4_{\pm4.3}}$ & $\mathbf{79.4_{\pm0.8}}$  \\
MLA (best) & $\mathbf{88.0_{\pm1.6}}$ & $\mathbf{79.2_{\pm3.7}}$ & $\mathbf{68.8_{\pm2.9}}$ & $\mathbf{79.6_{\pm1.1}}$  \\

    \bottomrule
    \end{tabular}
\end{table}

\begin{table}
    \centering
    \caption{Accuracy of ResNeXt50 adjusted with each method on Cifar100-LT. MLA and \ours\ achieve comparable average accuracy. MLA outperforms ALA in average accuracy on CIFAR100-LT.}
    \label{tab:acc_resnext}
    \begin{tabular}{lcccc}
    \toprule
    Method      & \textit{Many}  & \textit{Medium}     & \textit{Few} & Average  \\  \cmidrule(r){1-1} \cmidrule(l){2-5} 
Baseline & $\mathbf{77.4_{\pm0.3}}$ & $49.8_{\pm0.5}$ & $18.2_{\pm0.4}$ & $49.4_{\pm0.3}$  \\
ALA ($\gama = 1.0$) & $73.3_{\pm0.6}$ & $50.9_{\pm0.5}$ & $31.0_{\pm0.4}$ & $52.4_{\pm0.3}$  \\
ALA (best) & $76.3_{\pm0.3}$ & $51.9_{\pm0.4}$ & $26.2_{\pm0.4}$ & $52.2_{\pm0.2}$  \\
1v1 ($\gamo = 0.5$) & $75.6_{\pm0.3}$ & $\mathbf{53.5_{\pm0.5}}$ & $27.6_{\pm0.7}$ & $53.0_{\pm0.2}$  \\
1v1 (best) & $70.5_{\pm0.7}$ & $\mathbf{54.0_{\pm1.2}}$ & $\mathbf{36.6_{\pm0.9}}$ & $\mathbf{54.2_{\pm0.3}}$  \\
MLA ($\gamm = 0.5$) & $74.5_{\pm0.3}$ & $\mathbf{53.7_{\pm0.8}}$ & $30.3_{\pm0.7}$ & $53.5_{\pm0.2}$  \\
MLA (best) & $72.9_{\pm0.5}$ & $\mathbf{53.8_{\pm1.1}}$ & $33.4_{\pm0.6}$ & $\mathbf{54.0_{\pm0.3}}$  \\
    \bottomrule
    \end{tabular}
\end{table}

\begin{table}
    \centering
    \caption{Accuracy of ResNeXt50 adjusted with each method on ImageNet-LT. Although insufficient NC has occurred (see Table \ref{tab:hypara}) and the assumptions of our theory do not hold completely, MLA and \ours\ achieve comparable average accuracy. }
    \label{tab:acc_imagenet}
    \begin{tabular}{lcccc}
    \toprule
    Method      & \textit{Many}  & \textit{Medium}     & \textit{Few} & Average  \\  \cmidrule(r){1-1} \cmidrule(l){2-5}
Baseline & $\mathbf{68.1_{\pm0.2}}$ & $43.0_{\pm0.2}$ & $17.0_{\pm0.5}$ & $49.1_{\pm0.2}$  \\
ALA ($\gama = 1.0$) & $65.3_{\pm0.2}$ & $48.1_{\pm0.1}$ & $31.2_{\pm0.7}$ & $\mathbf{52.4_{\pm0.1}}$  \\
ALA (best) & $63.1_{\pm0.4}$ & $48.7_{\pm0.2}$ & $36.7_{\pm0.9}$ & $\mathbf{52.6_{\pm0.2}}$  \\
1v1 ($\gamo = 0.5$) & $48.7_{\pm0.5}$ & $48.0_{\pm0.4}$ & $48.4_{\pm0.6}$ & $48.3_{\pm0.2}$  \\
1v1 (best) & $62.8_{\pm0.4}$ & $\mathbf{49.3_{\pm0.3}}$ & $34.7_{\pm0.7}$ & $\mathbf{52.5_{\pm0.2}}$  \\
MLA ($\gamm = 0.5$) & $38.9_{\pm0.7}$ & $44.0_{\pm0.3}$ & $\mathbf{52.3_{\pm0.6}}$ & $43.2_{\pm0.3}$  \\
MLA (best) & $62.4_{\pm0.4}$ & $\mathbf{49.3_{\pm0.2}}$ & $36.0_{\pm0.5}$ & $\mathbf{52.6_{\pm0.2}}$  \\
    \bottomrule
    \end{tabular}
\end{table}

\begin{table}
    \centering
    \caption{Accuracy of MLP adjusted with each method on Helena. Although insufficient NC has occurred (see Table \ref{tab:hypara}) and the assumptions of our theory do not hold completely, MLA and \ours\ achieve comparable average accuracy. }
    \label{tab:helena}
    \begin{tabular}{lcccc}
    \toprule
    Method      & \textit{Many}  & \textit{Medium}     & \textit{Few} & Average  \\  \cmidrule(r){1-1} \cmidrule(l){2-5}
Baseline & $\mathbf{36.1_{\pm1.3}}$ & $21.6_{\pm0.6}$ & $17.4_{\pm1.0}$ & $25.2_{\pm0.3}$  \\
ALA ($\gama = 1.0$) & $\mathbf{34.7_{\pm1.4}}$ & $26.5_{\pm0.8}$ & $23.8_{\pm0.3}$ & $28.4_{\pm0.3}$  \\
ALA (best) & $31.2_{\pm1.2}$ & $\mathbf{28.8_{\pm0.6}}$ & $\mathbf{27.4_{\pm0.3}}$ & $\mathbf{29.2_{\pm0.4}}$  \\
1v1 ($\gamo = 0.5$) & $32.9_{\pm1.0}$ & $\mathbf{29.0_{\pm1.0}}$ & $24.7_{\pm0.8}$ & $\mathbf{28.9_{\pm0.4}}$  \\
1v1 (best) & $31.5_{\pm1.2}$ & $\mathbf{29.8_{\pm1.6}}$ & $25.6_{\pm0.5}$ & $\mathbf{29.0_{\pm0.4}}$  \\
MLA ($\gamm = 0.5$) & $31.1_{\pm0.9}$ & $\mathbf{30.2_{\pm1.0}}$ & $26.0_{\pm0.6}$ & $\mathbf{29.1_{\pm0.5}}$  \\
MLA (best) & $31.3_{\pm1.7}$ & $\mathbf{29.8_{\pm1.8}}$ & $25.8_{\pm1.0}$ & $\mathbf{29.0_{\pm0.3}}$  \\
    \bottomrule
    \end{tabular}
\end{table}

\subsection{Combination with Other Methods}\label{app:experiments_doda}
Since MLA only adjusts the logits during inference, it can be seamlessly integrated with other LTR techniques. Notably, methods classified under ``Information Augmentation'' and ``Module Improvement'' in the taxonomy by \citet{Zhang2021DeepLongTailed} are particularly compatible. Here, we present experimental results combining MLA with the approach proposed by \citet{Wang2023KillTwo}, which falls under ``Information Augmentation.'' We compare the average accuracies of a model trained with cross entropy loss against one with MLA. The dataset used is CIFAR100-LT of our implementation, while other code and experimental settings adhere to the official implementation by \citet{Wang2023KillTwo}. Note that these results are not directly comparable with our other experiments due to differences in setup. The results are summarized in Table \ref{tab:doda}. As shown, combining MLA with other LTR methods can further improve accuracy.

\begin{table}

    \centering
    \caption{Accuracy of ResNet32 trained with \citet{Wang2023KillTwo} and adjusted with MLA on CIFAR100-LT. MLA can be used effectively in combination with other LTR methods to improve accuracy.}
    \label{tab:doda}
    \begin{tabular}{lcccc}
    \toprule
    Method      & \textit{Many}  & \textit{Medium}     & \textit{Few} & Average  \\  \cmidrule(r){1-1} \cmidrule(l){2-5}
    \citet{Wang2023KillTwo} & $\mathbf{73.5}$ & $45.1$ & $13.5$ & $44.9$  \\
    \citet{Wang2023KillTwo} + MLA & ${63.8}$ & $\mathbf{51.7}$ & $\mathbf{30.2}$ & $\mathbf{49.1}$  \\
    \bottomrule
    \end{tabular}

\end{table}

\subsection{Higher Imbalance Factor}\label{app:experiments_rho}
To evaluate the effectiveness of MLA on more imbalanced datasets, we conducted experiments on the CIFAR-LT dataset with $\rho=200$. All other experimental settings were identical to those described in Section \ref{sec:exp_settings}. The results are shown in Tables \ref{tab:acc_cifar100_resnet34_more} and \ref{tab:acc_cifar10_resnet34_more}. As in the case with $\rho=100$, the results demonstrate that MLA closely approximates \ours\ in all scenarios. MLA achieves higher mean accuracy than ALA on CIFAR100-LT in this case as well.

\begin{table}

    \centering
    \caption{Accuracy of ResNet34 adjusted with each method on CIFAR100-LT with $\rho=200$. MLA and \ours\ achieve comparable average accuracy. MLA outperforms ALA in average accuracy on CIFAR100-LT.}
    \label{tab:acc_cifar100_resnet34_more}
    \begin{tabular}{lcccc}
    \toprule
    Method      & \textit{Many}  & \textit{Medium}     & \textit{Few} & Average  \\  \cmidrule(r){1-1} \cmidrule(l){2-5} 
    Baseline & $\mathbf{78.4_{\pm0.3}}$ & $48.4_{\pm0.5}$ & $12.5_{\pm0.5}$ & $43.1_{\pm0.3}$  \\
    ALA ($\gama = 1.0$) & $69.9_{\pm0.8}$ & $46.3_{\pm0.7}$ & $21.4_{\pm1.0}$ & $43.4_{\pm0.7}$  \\
    ALA (best) & $76.3_{\pm0.3}$ & $50.3_{\pm0.4}$ & $19.3_{\pm1.0}$ & $45.7_{\pm0.4}$  \\
    1v1 ($\gamo = 0.5$) & $76.5_{\pm0.4}$ & $\mathbf{53.3_{\pm0.4}}$ & $20.2_{\pm0.6}$ & $47.0_{\pm0.2}$  \\
    1v1 (best) & $74.6_{\pm0.8}$ & $\mathbf{53.6_{\pm0.5}}$ & $\mathbf{23.4_{\pm1.2}}$ & $\mathbf{47.8_{\pm0.4}}$  \\
    MLA ($\gamm = 0.5$) & $75.4_{\pm0.5}$ & $\mathbf{53.1_{\pm0.6}}$ & $22.5_{\pm0.6}$ & $\mathbf{47.6_{\pm0.3}}$  \\
    MLA (best) & $73.5_{\pm0.9}$ & $\mathbf{52.9_{\pm1.0}}$ & $\mathbf{24.4_{\pm0.4}}$ & $\mathbf{47.7_{\pm0.5}}$  \\
    \bottomrule
    \end{tabular}

\end{table}

\begin{table}

    \centering
    \caption{Accuracy of ResNet34 adjusted with each method on CIFAR10-LT with $\rho=200$. MLA and \ours\ achieve comparable average accuracy.}
    \label{tab:acc_cifar10_resnet34_more}
    \begin{tabular}{lcccc}
    \toprule
    Method      & \textit{Many}  & \textit{Medium}     & \textit{Few} & Average  \\  \cmidrule(r){1-1} \cmidrule(l){2-5}
    Baseline & $\mathbf{91.6_{\pm1.6}}$ & $\mathbf{69.7_{\pm2.2}}$ & $54.0_{\pm3.3}$ & $70.0_{\pm0.6}$ \\
    ALA ($\gama = 1.0$) & $81.7_{\pm8.1}$ & $\mathbf{69.3_{\pm2.4}}$ & $\mathbf{69.3_{\pm4.6}}$ & $\mathbf{73.0_{\pm1.8}}$ \\
    ALA (best) & $\mathbf{89.2_{\pm1.9}}$ & $\mathbf{70.5_{\pm2.0}}$ & $\mathbf{64.5_{\pm3.0}}$ & $\mathbf{73.7_{\pm0.6}}$ \\ 
    1v1 ($\gamo = 0.5$) & $\mathbf{88.4_{\pm3.2}}$ & $\mathbf{71.4_{\pm2.0}}$ & $\mathbf{64.9_{\pm4.6}}$ & $\mathbf{73.9_{\pm0.6}}$ \\ 
    1v1 (best) & $\mathbf{87.5_{\pm3.9}}$ & $\mathbf{71.7_{\pm1.9}}$ & $\mathbf{66.1_{\pm5.4}}$ & $\mathbf{74.2_{\pm0.7}}$ \\ 
    MLA ($\gamm = 0.5$) & $\mathbf{88.8_{\pm3.0}}$ & $\mathbf{71.4_{\pm2.0}}$ & $\mathbf{64.1_{\pm4.5}}$ & $\mathbf{73.7_{\pm0.7}}$ \\ 
    MLA (best) & $\mathbf{88.0_{\pm3.4}}$ & $\mathbf{71.7_{\pm1.9}}$ & $\mathbf{65.4_{\pm5.2}}$ & $\mathbf{74.1_{\pm0.8}}$ \\
    \bottomrule
    \end{tabular}

\end{table}

\end{document}